\newcommand{\mat}[1]{\mathbf{#1}}
\newcommand{\fedavg}{FedAVG\xspace}
\newtheorem{theorem}{Theorem}[section]
\newtheorem{proposition}[theorem]{Proposition}
\pgfplotsset{%
	,compat=1.12
	,every axis x label/.style={at={(current axis.right of origin)},anchor=north west}
	,every axis y label/.style={at={(current axis.above origin)},anchor=north east}
}
\definecolor{scarlet}{rgb}{1.0, 0.13, 0.0}
\definecolor{brightmaroon}{rgb}{0.76, 0.13, 0.28}
\definecolor{mediumturquoise}{rgb}{0.28, 0.82, 0.8}
\definecolor{fandango}{rgb}{0.71, 0.2, 0.54}
\definecolor{antiquewhite}{rgb}{0.98, 0.92, 0.84}
\definecolor{babyblue}{rgb}{0.54, 0.81, 0.94}
\definecolor{brilliantlavender}{rgb}{0.96, 0.73, 1.0}
\definecolor{bronze}{rgb}{0.8, 0.5, 0.2}
\definecolor{cornsilk}{rgb}{1.0, 0.97, 0.86}
\definecolor{lavenderpink}{rgb}{0.98, 0.68, 0.82}
\definecolor{sandybrown}{rgb}{0.96, 0.64, 0.38}
\definecolor{celadon}{rgb}{0.67, 0.88, 0.69}
\newcommand{\highlight}[1]{\underline{\textbf{#1}}}
\DeclareMathOperator*{\concat}{%
    \mathchoice%
        {\Big\Vert}%
        {\big\Vert}%
        {\Vert}%
        {\Vert}%
}
\author{Qiying Pan}
\email{sim10\_arity@sjtu.edu.cn}
\affiliation{%
  \institution{Shanghai Jiao Tong University}
  \country{}
}
\author{Ruofan Wu}
\email{ruofan.wrf@antgroup.com}
\affiliation{%
  \institution{Ant Group}
  \country{}
}
\author{Tengfei Liu}
\email{aaron.ltf@antgroup.com}
\affiliation{%
  \institution{Ant Group}
  \country{}
}
\author{Tianyi Zhang}
\email{zty113091@antgroup.com}
\affiliation{%
  \institution{Ant Group}
  \country{}
}
\author{Yifei Zhu}
\email{yifei.zhu@sjtu.edu.cn}
\affiliation{%
  \institution{Shanghai Jiao Tong University}
  \country{}
}
\author{Weiqiang Wang}
\email{weiqiang.wwq@antgroup.com}
\affiliation{%
  \institution{Ant Group}
  \country{}
}
\begin{document}

\title{FedGKD: Unleashing the Power of Collaboration in Federated Graph Neural Networks}

\begin{abstract}

Federated training of Graph Neural Networks (GNN) has become popular in recent years due to its ability to perform graph-related tasks under data isolation scenarios while preserving data privacy. However, graph heterogeneity issues in federated GNN systems continue to pose challenges. Existing frameworks address the problem by representing local tasks using different statistics and relating them through a simple aggregation mechanism. However, these approaches suffer from limited efficiency from two aspects: low quality of task-relatedness quantification and inefficacy of exploiting the collaboration structure. To address these issues, we propose FedGKD, a novel federated GNN framework that utilizes a novel client-side graph dataset distillation method to extract task features that better describe task-relatedness, and introduces a novel server-side aggregation mechanism that is aware of the global collaboration structure. We conduct extensive experiments on six real-world datasets of different scales, demonstrating our framework's outperformance.

\end{abstract}
\maketitle

\section{Introduction}
Federated training of Graph Neural Networks (GNN) has gained considerable attention in recent years due to its ability to apply a widely used privacy-preserving framework called Federated Learning (FL) \cite{mcmahan2017communication, kairouz2021advances} to GNN training. This approach facilitates collaborative training among isolated datasets while preserving data privacy. The protocol generally follows a typical federated learning approach, whereby the server broadcasts a global GNN model to each client. Upon receiving the model, each client trains the model using its local graph and transmits its local model to the server, which then aggregates the local weights to obtain a new global model and initiates another round of the broadcast-training-aggregation procedure. Throughout the process, the client-side graphs are kept locally, thus safeguarding privacy. Federated GNNs have successfully addressed the challenge of isolating graph data in real-life scenarios where organizations and corporations collect their private graphs, restricting others from accessing them while seeking a collaborative training approach to improve their personalized model performance.

However, the effectiveness of federated GNN frameworks is limited by graph heterogeneity issues. The isolated graph datasets exhibit significant variation, and thus, simple aggregation of all local models leads to a degradation of model performance \cite{ramezani2021learn}. The recent developments of personalized federated learning (PFL) \cite{smith2017federated,arivazhagan2019federated,xu2023personalized,bao2023optimizing} have explored mechanisms that adapt to client heterogeneity by allowing each client to train their personalized model while encouraging inter-client collaboration through properly handling the collaboration structure among clients.

Although the PFL framework is effective in mitigating statistical heterogeneity issues, it is noteworthy that graph heterogeneity issues are more severe than that in the independent setting. This is primarily due to the complex nature of graphs that incorporate topology information. In particular, graph heterogeneity encompasses not only the distributional discrepancy of node features but also the presence of disparate graph structures \cite{ramezani2021learn}. To address this challenge, a range of graph-related solutions have been proposed. 
However, the existing solutions for graph heterogeneity issues suffer from limited efficiency due to two reasons:

{
\textbf{Quality of task-relatedness quantification}: It has been shown in recent studies \cite{bao2023optimizing,pmlr-v195-zhao23b} that high-quality characterizations of the collaboration structure among clients are crucial for the generalization performance in PFL. Existing works on PFL over graphs rely on weights and gradients \cite{xie2021federated} or graph embeddings with random input graphs \cite{baek2022personalized}. The former involves similarity computation of high-dimensional random vectors, which may be imprecise due to the curse of dimensionality \cite{hastie2009elements}, while the latter one discards information related to local labels and losses expressivity.

\textbf{Efficacy of exploiting collaboration structure}: Another reason for unsatisfactory efficiency is the inadequate handling of the collaboration structure, which is typically reflected in the personalized aggregation mechanisms. Previous works base their aggregation procedures on \emph{pairwise relationships} among client tasks \cite{baek2022personalized}, which only exploit inter-client task relatedness \emph{locally}. It is therefore of interest to explore mechanisms that operate from a \emph{global} point of view in a principled way.
}

To address the problems, we propose a novel \highlight{Fed}erated \highlight{G}raph Neural Network with \highlight{K}ernelized aggregation of \highlight{D}istilled information (\textbf{FedGKD}) that achieves better utilization of the collaboration among clients. This framework comprises two modules: 
A \textbf{task feature extractor} that utilizes a novel graph data distillation method and a \textbf{task relator} that aggregates local models through a mechanism that is aware of the global property of the collaboration structure inferred from the outputs of the task feature extractor. 
Specifically, the task feature extractor devises a novel \emph{dynamic graph dataset distillation mechanism} that represents each local task by distilling local graph datasets into size-controlled synthetic graphs at every training round, enabling efficient similarity computation between clients while sufficiently incorporating local task information. The task relator first constructs a collaboration network from the distilled task features and relates tasks through a novel aggregation mechanism based on the network's \emph{global connectivity}, which measures the task-relatedness in a global sense.
As a summary, our contributions are: 
\begin{itemize}[leftmargin=*]
    \item We propose a task feature extractor based on a novel dynamic graph data distillation method, representing each local task with a distilled synthetic graph generated from all the local model weights trained at each round. The task features extracted from distilled graphs contain both data and model information, while also allowing for efficient evaluation of task-relatedness.
    \item We propose a task relator that constructs a collaboration network from the distilled graphs and relates tasks by operating a novel kernelized attentive aggregation mechanism upon local weights that encodes the global connectivity of the collaboration network.
    \item We conduct extensive experiments to validate that our framework consistently outperforms state-of-the-art personalized GNN frameworks on six real-world datasets of varying scales under both overlapping and non-overlapping settings.
\end{itemize}

The paper proceeds as follows. In Section \ref{sec:rw}, we present a review of related works. This is followed by the introduction of the preliminaries on GNN and FL in Section \ref{sec:pr}. Next, in Section \ref{sec:pf}, we introduce and formulate the problem of federated GNN. The detailed design of the framework to solve this problem is presented in Section \ref{sec:design}. In Section \ref{sec:ex}, we present the experimental results. Finally, Section \ref{sec:con} concludes the paper.

\section{Related Work\label{sec:rw}}

{
\subsection{Personalized Federated Learning}
The learning procedures of homogeneous federated learning \cite{mcmahan2017communication, kairouz2021advances} are often considered as special forms of distributed optimization algorithms like local SGD \cite{woodworth2020local}. However, these methods have been shown to suffer from client heterogeneity in terms of both convergence \cite{karimireddy2020scaffold} and client-side generalization \cite{chen2021theorem,pmlr-v195-zhao23b}. Personalized federated learning approaches have primarily focused on addressing the latter issue by incorporating adaptation strategies that can be deployed at the client side, server side, or both.
\textbf{Client-side adaptation} methods typically utilize parameter decoupling paradigms that enable flexible aggregation of partial parameters \cite{arivazhagan2019federated, pillutla2022federated} or control the optimization of local objectives by regularizing towards the global optimum \cite{li2021ditto}. However, these methods often overlook the overall collaboration structure among the clients \cite{bao2023optimizing, pmlr-v195-zhao23b}. It has been shown that with a correctly-informed collaboration structure that precisely describes the task-relatedness between clients, simple procedures can achieve minimax optimal performance \cite{pmlr-v195-zhao23b}.
On the other hand, \textbf{server-side adaptation} methods aim to measure the task-relatedness among clients and derive refined aggregation mechanisms. \cite{bao2023optimizing} utilize tools from transfer learning theory to conduct an estimating procedure that clusters clients into subgroups. \cite{ye2023personalized, xu2023personalized} propose to optimize collaboration among clients on-the-fly by solving a quadratic program at the server-side during each aggregation step. It is important to note that server-side adaptation methods often involve the transmission of additional information other than the model parameters.

\subsection{Federated Graph Representation Learning}
In \cite{ramezani2021learn}, the authors showed that naively applying FedAvg to distributed GNN training will result in irreducible error under distinct client-side graph structures, which hampers convergence. A recent line of work has been attempting to adopt personalization strategies for federated learning of graph neural networks. For instance, \cite{tan2023federated} uses client-side adaptation by sharing only a sub-structure of the client-side GNN. \cite{zhang2021subgraph} equips each client with an auxiliary neighborhood generation task. \cite{xie2021federated} applies a server-side adaptation strategy that dynamically clusters clients using intermediate gradient updates. Moreover, \cite{baek2022personalized} combines client-side and server-side adaptation methods and measures task similarity using GNN outputs based on a common input random graph.
\subsection{Dataset Distillation}
The method of dataset distillation \cite{wang2018dataset} was originally proposed as a way to improve training efficiency by distilling a large dataset into a significantly smaller one while keeping model performance almost intact. Later developments generalized the approach to graph-structured data \cite{jin2022graph, jin2022condensing}. A notable property of dataset distillation is that the distilled datasets are observed to exhibit good privacy protection against empirically constructed adversaries \cite{dong2022privacy}. This empirical property has also lead to innovations in one-shot federated learning \cite{zhou2020distilled}, which is very different from the setups in PFL and is considered an orthogonal application.
}

\begin{figure*}[t]
\centering
\subfloat[Task feature extractor\label{fig:tfe}]{\includegraphics[width=0.48\textwidth]{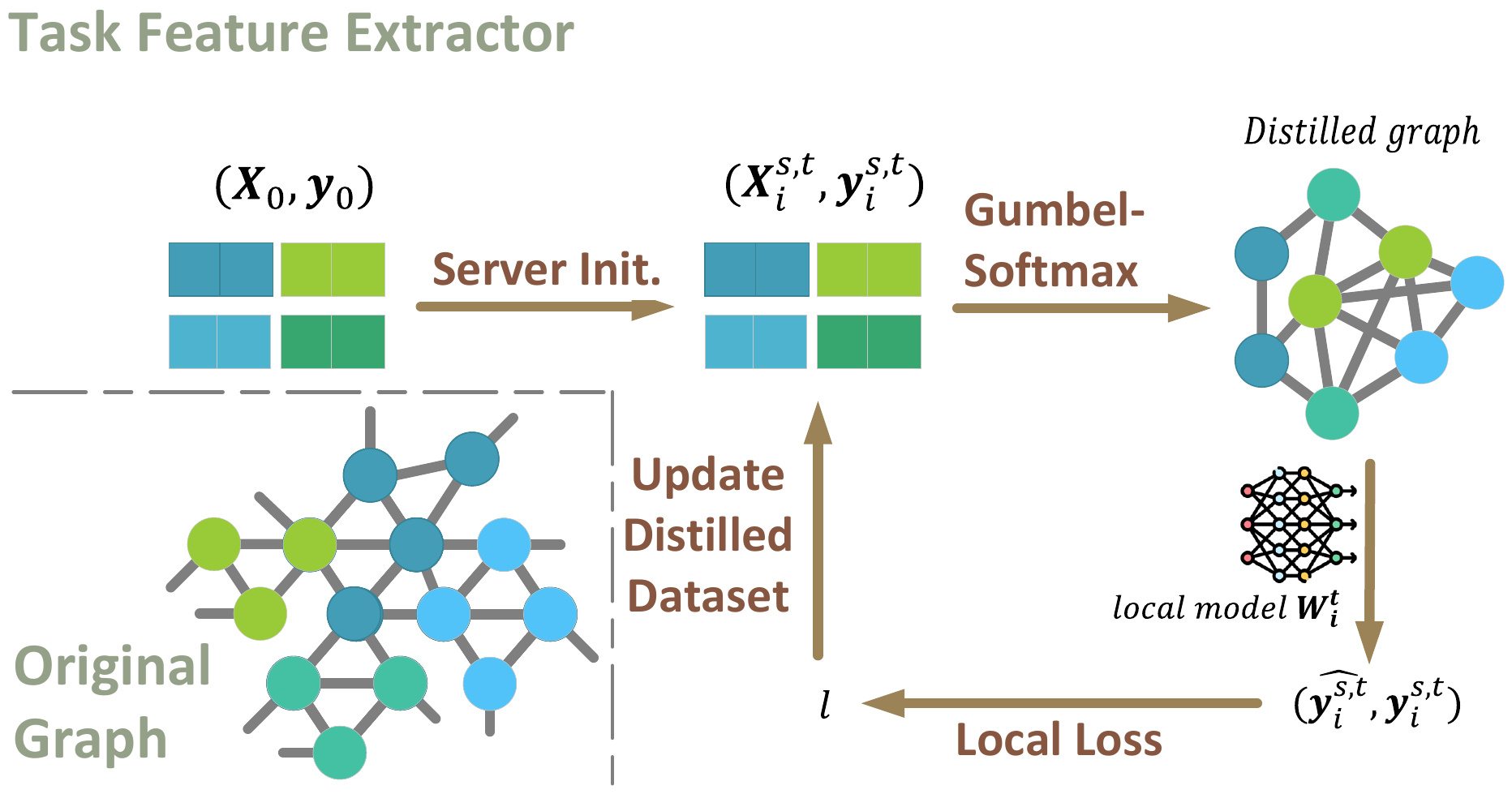}
}
\hfill
\hspace{-0.5cm}
\subfloat[Task relator\label{fig:tr}]
{\includegraphics[width=0.48\textwidth]{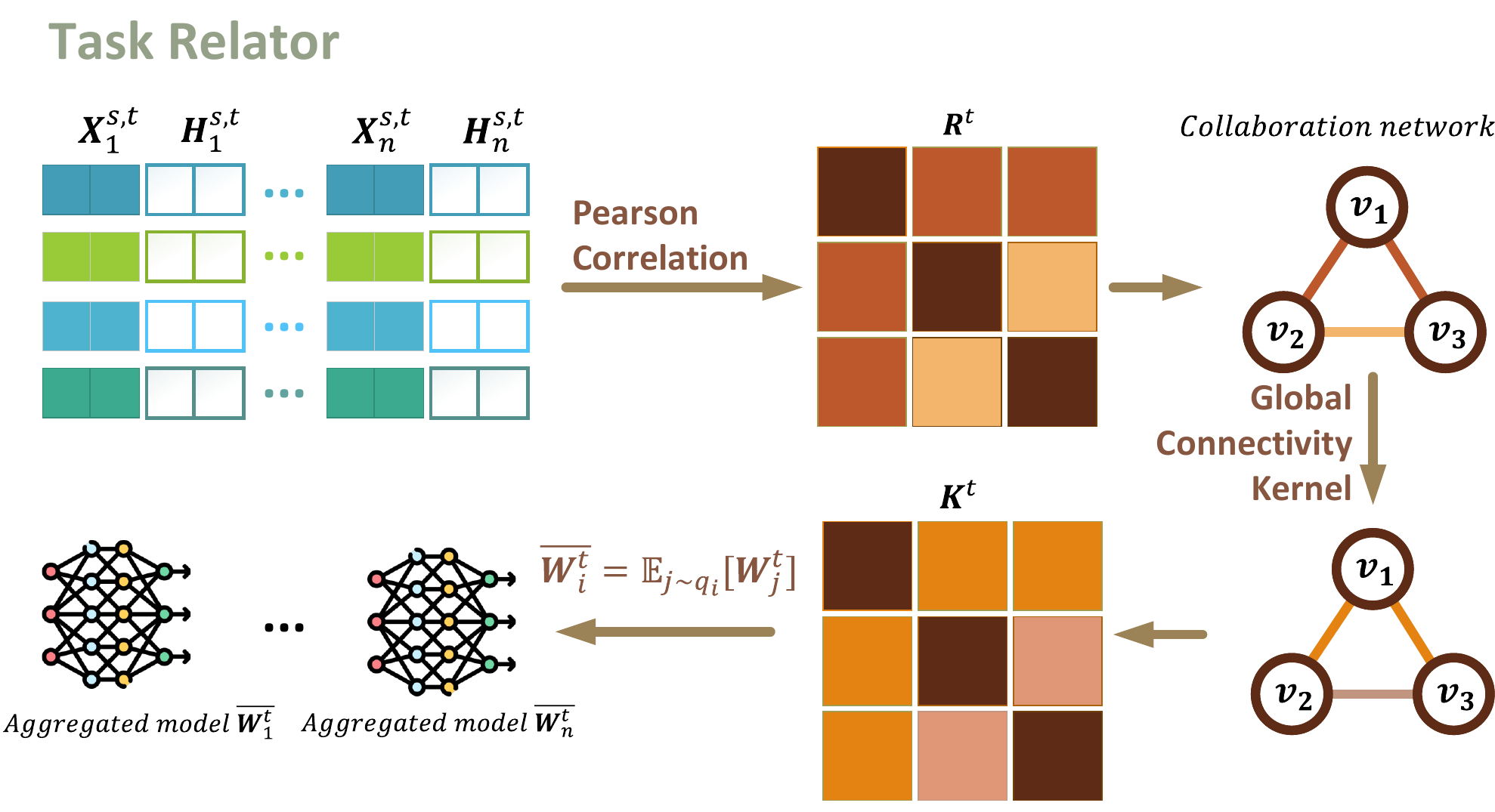}
}
\caption{Overview of two modules in the proposed FedGKD framework.}
\label{fig:overview}
\end{figure*}

\section{Preliminaries\label{sec:pr}}
In this section, we provide a brief introduction to two key concepts in our paper: Graph Neural Network and Federated Learning, which are presented in separate subsections.
\subsection{Graph Neural Network}
Consider a graph $G=(V,E)$, where $V$ represents the node set and $E$ represents the edge set. The graph is associated with a node feature matrix $\mathbf{X}\in \mathbb{R}^{|V|\times D}$. We can use Graph Neural Networks (GNN) to embed nodes in the graph with low dimensional vectors. 
An $L$-layer GNN in the message passing form \cite{pmlr-v70-gilmer17a,xu2018powerful} is recursively defined in \eqref{eq:gnn},
where $\mathbf{h}_u^l$ represents the embedding of node $u$ output from the $l-$th GNN layer, $UPD$ is a function that generates embeddings based on the former layer outputs, $AGGR$ is a function that aggregates the embeddings together and $\mathcal{N}(u)$ represents the set of neighboring nodes of node $u$.
\begin{equation}
\label{eq:gnn}
\begin{aligned}
        \mathbf{h}_u^l = UPD^l [\mathbf{h}_u^{l-1},AGG^l(\{\mathbf{h}_v^{l-1}:v\in \mathcal{N}(u) \})].
\end{aligned}
\end{equation}
In general, the outputs of $L-$th GNN layers are passed through a customized READOUT layer to accomplish various graph-related tasks. For instance, in node classification tasks, 
a simple READOUT layer can be selected as a linear layer with the number of categories as the output dimension.

\subsection{Federated Learning}
Federated Learning (FL) was introduced to address data isolation issues while preserving privacy \cite{mcmahan2017communication}. It enables collaborative training among clients without exposing raw datasets. A typical FL framework consists of three stages: (1) \textit{model initialization}, where the server broadcasts initial model weights to all clients; (2) \textit{local training}, where each client trains a local model using the initial model weights and its own dataset, and uploads the local model to the server; (3) \textit{global aggregation}, wherein the server aggregates the local models into one or more new models and broadcasts the result to each client to initiate the next training round. The FL procedure typically alternates between stage (2) and (3) until convergence.

\section{Problem Formulation\label{sec:pf}}
A personalized federated GNN framework aims to collaboratively learn local GNN models in a privacy-preserving manner. This allows the local models to fit their respective local datasets while leveraging information from other clients to improve sample efficiency. In the system, there are $n$ clients and one server. Each client stores a local graph dataset $G_i=(V_i,E_i, \mathbf{X}_i)$, where $V_i$ represents the node set, $E_i$ represents the edge set, $\mathbf{X}_i \in \mathbb{R}^{|V_i| \times D}$ contains the node features, and $[n]$ denotes the set of positive integers from $1$ to $n$. Within the system, the $n$ clients train $n$ GNN models $f(G_i;\mathbf{W}_i), i \in [n]$, with the same structure but different parameters ${\mathbf{W}i}, {i \in [n]}$. Additionally, we assume that the prediction function $f = g \circ h$ is composed of an $L$-layer message passing GNN module $h$ with a hidden dimension of $d$, and a READOUT module $g$ that maps the node embedding extracted by $h$ to downstream task predictions. The goal of a personalized federated GNN framework is formulated as 
\begin{equation}
        \min_{\mathbf{W}_i \in \Omega_i, i \in [n]} \sum_{i=1}^n \mathcal{L}(f(G_i; \mathbf{W}_i), \mathbf{y}_i)
    \label{eq:goal}
\end{equation}
where $\mathcal{L}$ is the loss function, and $\mathbf{y}_i$ contains all the labels belonging to client $i$.
{To encourage collaboration among clients, the parameter spaces $\Omega_i, i \in [n]$, are usually assumed to be related \cite{duan2022adaptive}. The precise structure of this relation is sometimes implicit and instead reflected in the optimization procedure \cite{chen2022personalized}. Under the graph representation learning setup, we require $\Omega_i$ to capture the relatedness between corresponding tasks, taking into account the topological structure of the graph \cite{girvan2002community}, as well as feature and label information. Additionally, we impose an extra constraint to ensure that local models do not deviate significantly from each other. This is achieved by adding a proximal regularization term that prevents overfitting to local data, which has been shown to benefit many federated learning procedures \cite{li2020federated, li2021ditto}.}

\section{Design\label{sec:design}}
\subsection{Overview}

In this section, we provide a detailed introduction to our personalized federated graph learning framework, which aims to address two major problems:
\begin{itemize}[leftmargin=*] 
\item How to extract task features from the local dataset $G_i$ and local model parameters $\mathbf{W}_i$? 
\item How to relate local tasks with each other using the task features to aggregate $\mathbf{W}_i$? 
\end{itemize}

To address the first question, we propose a feature extractor based on dataset distillation, as illustrated in Fig. \ref{fig:tfe}, that captures all the information within the local model. The feature extractor generates a small graph in each round based on the current local model weights. To mitigate graph heterogeneity, the server distributes a common initial graph to all clients, preventing significant deviations among the distilled graphs.

To address the second question, we draw insights from recent advancements in kernel formulations of self-attention \cite{tsai2019transformer,chen2022structure}. We view the personalized aggregation process as an attentive mechanism operating on the \emph{collaboration network} among clients. We observe that several contemporary aggregation schemes overlook the global task relatedness. Leveraging tools from kernel theory, we derive a refined aggregation scheme based on exponential kernel construction that effectively incorporates global information, as shown in Fig. \ref{fig:tr}.

\subsection{Task Feature Extractor} 
\subsubsection{Motivation}
{It is well known in the theory of multi-task learning \cite{hanneke2022no, duan2022adaptive} that correct specifications of task relatedness may fundamentally impact the model performance, which has also been recently discovered in PFL \cite{pmlr-v195-zhao23b}. In hindsight, the ideal characterization of a (local) graph representation learning task might be either the \emph{joint} distribution of the local graph, feature and label variables; or the Bayes optimal learner dervied from the joint distribution \cite{pmlr-v195-zhao23b}. However, none of this information are available during FL, and various surrogates have been proposed in the context of graph PFL that extracts \emph{task features} from the (local) empirical distribution and the learned model.

The most ad-hoc solution is to use weights \cite{long2023multi, ye2023personalized} and gradients \cite{sattler2020clustered}, which are typically high-dimensional (random) vectors. However, computing their relations using metrics like Euclidean or cosine similarity can be unreliable due to the curse of dimensionality phenomenon \cite{hastie2009elements}, as empirically validated in \cite{baek2022personalized}. As a notable state-of-the-art model, FedPUB \cite{baek2022personalized} uses low-dimensional graph embeddings that are produced by passing a shared random graph between clients. However, since the embedding computation only involves message passing GNN layers, the resulting embeddings are \emph{incomplete}, as they fail to represent the READOUT layer that follows these GNN layers. The READOUT layer encodes label-related information.}
This limitation is significant when two datasets share similar graph distributions but have divergent label distributions. This can result in two local models with similar GNN layer weights but different READOUT layer weights. In such cases, the embeddings, which are outputs of similar GNN layers, cannot distinguish between the two datasets. We conducted a small experiment to validate this point. We visualized the embeddings for GCN layers trained on two datasets with similar graph distributions but divergent label distributions in Fig. \ref{fig:embed_vis_dis}. To be more specific, we trained node embeddings on the original Cora graph \cite{yang2016revisiting} and a revised Cora graph in which the label $y_v\in [C]$ of any node $v$ is modified to $C+1-y_v$, where $C=7$ is the number of classes in Cora. The results show that the two embedding spaces are similar, as vertices belonging to the same community are located in similar positions in the space, as shown in Fig. \ref{fig:embed_vis_dis}.

To address the challenges encountered in PFL frameworks, we leverage graph dataset distillation, a method that simultaneously compresses the local data distribution and the learned local model into a size-controlled small dataset that is comparable across all client tasks. In the following sections, we will introduce dataset distillation and explain how we incorporate it into our framework.

\subsubsection{Dataset Distillation}
Dataset distillation \cite{wang2018dataset} (DD) is a centralized knowledge distillation method that aims to distill large datasets into smaller ones. For client $i$, a distilled dataset $(G_i^s, \mathbf{y}_i^s)$ is defined such that a neural network model trained on $G_i^s$ can achieve comparable performance to the one trained on the original dataset $G_i$, as formulated in \eqref{eq:dd}.
\begin{equation}
    \label{eq:dd}
    \begin{aligned}
    \min_{G_i^s, \mathbf{y}_i^s} \mathcal{L}(f(G_i;\mathbf{W}_i^s),\mathbf{y}_i) \quad
    \text{s.t. }  \mathbf{W}_i^s = \min_{\mathbf{W}_i'} \mathcal{L}(f(G_i^s;\mathbf{W}_i'),\mathbf{y}_i^s) 
    \end{aligned}
\end{equation}
{
According to previous studies \cite{jin2022graph}, many datasets can be distilled into condensed ones with sizes that are only around $1\%$ of the original dataset while still preserving model performance. Moreover, it has been empirically reported that distilled datasets offer good privacy protection \cite{dong2022privacy}.

Based on this observation, we propose using \emph{statistics of the distilled local datasets} as features that describe local tasks and obtain task-relatedness by evaluating the similarities between distilled dataset characteristics. As a straightforward adaptation of vanilla DD to federated settings, we may conduct isolated distillation steps \emph{before} the federated training and fix the estimated task-relatedness during federated training. This strategy could be implemented using off-the-shelf DD algorithms on graphs \cite{jin2022graph, jin2022condensing}. However, the quality of the distilled local datasets may be affected by (local) sample quality and quantity. Since PFL approaches typically improve local performance, we propose a refinement of the aforementioned \emph{static distillation} strategy that allows clients to distill their local datasets \emph{during} the federated training process, resulting in a series of distilled datasets ${G_i^{s, t}, \mathbf{y}_i^{s, t}}{t \in [T]}$ for each client $i$, with its corresponding distillation objective at round $t$ being:
\begin{equation}
     \label{eq:dd_fgl_2}
    \begin{aligned}
    &\min_{G_i^{s,t}, \mathbf{y}_i^{s,t}} \mathcal{L}(f(G_i^{s,t};\mathbf{W}_i^t),\mathbf{y}_i^{s,t}). 
    \end{aligned}
\end{equation}
Apart from its capability to adapt to the federated learning procedure, the objective \eqref{eq:dd_fgl_2} is computationally more efficient than the vanilla DD objective \eqref{eq:dd} as it avoids the bi-level optimization problem, which is difficult to solve \cite{wang2018dataset}. Instead, the objective \eqref{eq:dd_fgl_2} leverages the strength of the federated learning process, which usually produces performative intermediate results after a few rounds of aggregations. We refer to \eqref{eq:dd_fgl_2} as a \emph{dynamic distillation} strategy. Next, we present a detailed implementation of the proposed dynamic distillation procedure.
}
\begin{figure}[t]
\centering
\subfloat[Embeddings on Cora with the original labels]{\includegraphics[width=0.22\textwidth]{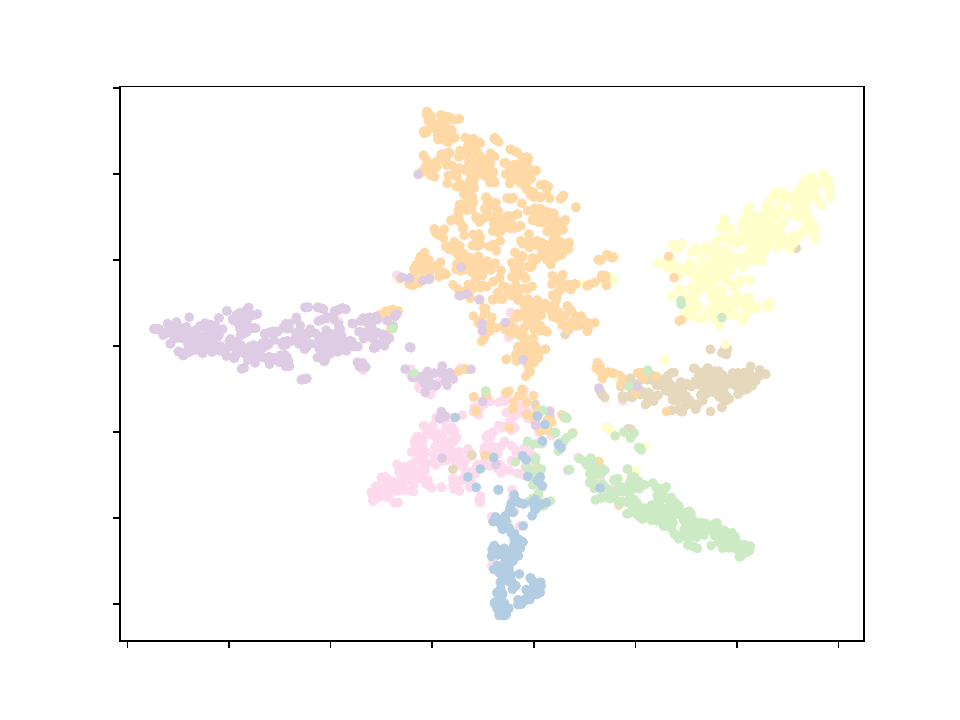}
}
\hfill
\hspace{-0.5cm}
\subfloat[Embeddings on Cora with relabelled nodes]{\includegraphics[width=0.22\textwidth]{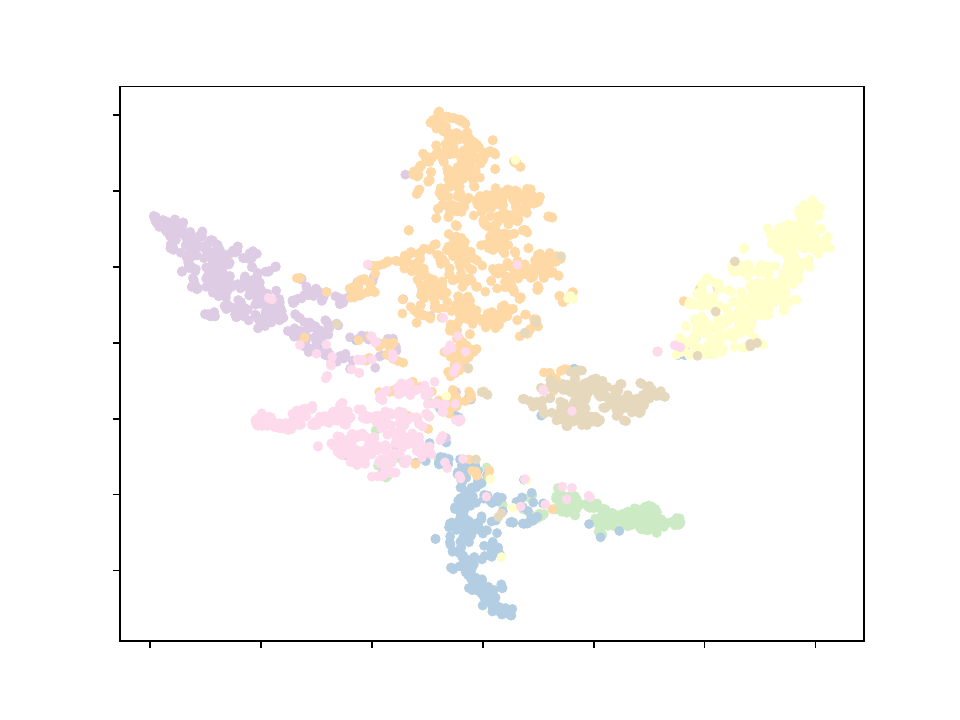}
}
\caption{Embedding spaces (depicted using two leading principle components) trained on two same graphs with divergent labels: vertices belonging to the same community have the same color.}
\label{fig:embed_vis_dis}
\end{figure}
\subsubsection{Implementation}
{
There are two algorithmic goals regarding the implementation of \eqref{eq:dd_fgl_2}: Firstly, the distilled datasets should allow efficient similarity comparisons. Secondly, the problem should be efficiently solved so that the extra computation cost for each client is controllable. Note that both goals are non-trivial since the optimization involves a graph-structured object that is not affected by permutations, resulting in alignment issues when performing similarity computation. The solution is detailed in Algorithm \ref{alg:extractor} in Appendix \ref{sec:alg}. Specifically, at each round $t \in [T]$, the size of the distilled graph across all clients will be fixed at $m \times C$, where $m$ represents the number of representative nodes in each category. The server first initializes node features $\mathbf{X}_0$, with each entry drawn independently from a standard Gaussian distribution $\mathcal{N}(0,1)$. The initial labels $\mathbf{y}_0$ are set to ensure that there are $m$ nodes belonging to each category. The tuple $(\mathbf{X}_0, \mathbf{y}_0)$ is broadcast to each client as the initial value of their local objectives, while the construction of the distilled graph structure is left to the clients' side to reduce communication cost. This common initialization technique alleviates the alignment issue between distilled graphs.
}


After each client receives the initial features $\mathbf{X}_0$ and labels $\mathbf{y}_0$, it begins to update the features and labels. Since directly optimizing the graph structure (i.e., among the space of possible binary matrices) is computationally intractable, we use the following simple generative model that describes the relationship between node features and edge adjacency for the distilled graph:
For a pair of nodes (regarding the distilled graph) $u$ and $v$ with features $\mathbf{x}_u^s$ and $\mathbf{x}_v^s$,
the probability of them being adjacent is given by 
\begin{equation}\label{Eq.cons_g}
    \mathbb{P}[\mathbf{A}^s_{uv} = 1] = \dfrac{e^{\langle \mathbf{x}^s_u, \mathbf{x}^s_v\rangle - \gamma}}{1 + e^{\langle \mathbf{x}^s_u, \mathbf{x}^s_v\rangle - \gamma}},
\end{equation}
where $\gamma > 0$ is a hyperparameter that controls edge sparsity. 
\footnote{This construction is inherently \emph{homophilic}. In principle, one could propose more sophisticated generative mechanisms with learnable parameters, but this may increase the computational cost of distillation. Experimentally, we have found this simple construction to be quite effective.}
Construction of the distilled graph involves sampling from the above distribution, which is not differentiable. Hence we adopt the Gumbel-softmax mechanism \cite{jang2016categorical,maddison2016concrete} to generate approximate yet differentiable samples. In particular, for each $u, v$, we first draw two independent samples $\omega$ and $\omega’$ from the standard Gumbel distribution. Next, we compute the following approximation: 
\begin{equation}
\label{Eq.cons_a} 
p_{uv}(\tau_g)= \dfrac{e^{\left(\langle \mathbf{x}^s_u, \mathbf{x}^s_v\rangle - \gamma + \omega - \omega^\prime\right) / \tau_g}}{1 + e^{\left(\langle \mathbf{x}^s_u, \mathbf{x}^s_v\rangle - \gamma + \omega - \omega^\prime\right) / \tau_g}}, 
\end{equation} 
which adopts a distribution limit $\lim_{\tau_g \rightarrow 0} p_{uv}(\tau_g) \overset{d}{=} \mathbf{A}^s_{uv}$. In practice, we use the straight-through trick \cite{jang2016categorical} to obtain discrete samples from \eqref{Eq.cons_a} while allowing smooth differentiation. We denote the distilled graph as $G^s = (\mathbf{X}^s, \mathbf{P})$ with $\mathbf{P}$ being the (approximated) adjacency matrix, with each entry derived from \eqref{Eq.cons_a}.

We utilize the local model weights to assess how well the distilled dataset fits the model and update $\mathbf{X}^s$ and $\mathbf{y}^s$ based on the same classification loss as each client's local learning objective. In practice, we have found that a few steps of gradient updates suffice for the learning performance. After obtaining the distilled graph, we extract task features ${M_i^t}_{i \in V_c}$ for client $i$ at round $t$ as follows: 
\begin{equation}
\label{eqn:task_feature} 
\mathbf{M}^t_i = \left[\mathbf{X}^{s,t}_i \concat\mathbf{H}^{s,t}_i\right],\quad \mathbf{H}^{s,t}_i := h(\mathbf{G}^{s,t}_i, \mathbf{W}^t_i).
\end{equation}
Note that although the distilled labels are not included in the task feature, the label information is fused into $\mathbf{X}^s$ during the distillation process. We will present an empirical study regarding other potential choices of task feature maps in section \ref{sec:ablation_feature}. 

\subsection{Task relator}
\subsubsection{Motivation}
We represent the estimated relationship among tasks using a \emph{time-varying collaboration network} $G_c^{t}=(V_c, \mathbf{R}^t)$, where $t\in [T]$, $V_c = [n]$, and $\mathbf{R}^t \in \mathbb{R}^{n\times n}$ represents the time-dependent task relation matrix. The entry $r^t_{ij}$ measures the task-relatedness between client $i$ and client $j$, obtained by computing similarities of their corresponding task features $\mathbf{M}^t_i$ and $\mathbf{M}^t_j$. This idea has been adopted in some recent PFL proposals \cite{chen2022personalized, ye2023personalized}. 

 {
 Since the matrix $\mathbf{R}^t$ encodes pairwise relationships among client tasks, it offers great flexibility in defining personalized aggregation protocols. We formulate the protocols as the following expectation:
 }
\begin{align}
    \overline{\mat{W}_i^t} \leftarrow \mathbb{E}_{j \sim q_i}\left[\mat{W}_j^t\right]
\end{align}
where $q_i$ is a client-specific distribution over $[n]$, with a trivial case of uniform distribution that corresponds to the aggregation rule in \fedavg. 
The above formulation is closely connected to the self-attention mechanism \cite{vaswani2017attention}. In particular, inspired by recent developments that generalize self-attention using kernel theory \cite{tsai2019transformer}, we parameterize $q_i$ using a kernel-induced distribution: 
\begin{align} 
q_i[j] = \dfrac{k(i,j)}{\sum_{j^\prime \in [n]}k(i, j^\prime)}, \end{align} 
where $k(\cdot, \cdot)$ is a kernel function.

The most straightforward choice would be the softmax kernel \cite{choromanski2020rethinking} that uses the exponentiated edge weights $k(i, j) = e^{r_{ij}}$. However, this method disregards other weights, resulting in a kernel function that only takes \textit{local connectivity} in the collaboration network into account, overlooking \textit{global connectivity}. We illustrate this point using the example in Fig. \ref{fig:connectivities}, where a collaboration network with three vertices has weighted links of $r_{12}< r_{23}=r_{13}$. If we directly use $r_{ij}$ and normalize them, the average local model weights $\overline{\mathbf{W}_1^t}$ will be very close to $\mathbf{W}_3^t$ and far from $\mathbf{W}^t_2$. However, the relation between node 1 and 2 is much stronger than what the quantity $r_{12}$ indicates, as they are also linked by another two-hop path comprising two heavily-weighted edges $(1,3)$ and $(2,3)$.
A recent work \cite{chen2022personalized} attempts to capture information beyond local task pairs by incorporating a GNN-like mechanism over a sparsified collaboration network. This approach aims to integrate more information through a few rounds of message passing. While the approach in \cite{chen2022personalized} extends the scope of similarity evaluation, it still operates in a \emph{local sense} due to the finiteness of message passing rounds and the inherent limitation of oversmoothing phenomenon.
To address this limitation, our framework proposes a novel kernel function that incorporates all the global connectivity. Specifically, our kernel extracts connectivity at hops from 1 to infinity while favoring connectivity with fewer hops.

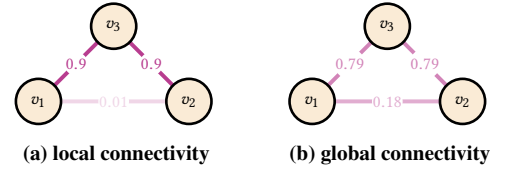
\begin{figure}[!h]
    \centering
    \begin{subfigure}{0.2\textwidth}
        \centering
        \begin{tikzpicture}
    \Vertex[label = $ v_1 $, color=antiquewhite, x=0, y=0]{v1},
    \Vertex[label = $ v_2 $, color=antiquewhite, x=2, y=0]{v2},
    \Vertex[label = $ v_3 $, color=antiquewhite, x=1, y=1]{v3}
    \Edge[color=fandango, opacity=.2, label=$ 0.01 $](v1)(v2)
    \Edge[color=fandango, opacity=.95, label=$ 0.9 $](v1)(v3)
    \Edge[color=fandango, opacity=.95, label=$ 0.9 $](v2)(v3)
\end{tikzpicture}
        \caption{local connectivity}
    \end{subfigure}
    \begin{subfigure}{0.2\textwidth}
        \centering
        \begin{tikzpicture}
    \Vertex[label = $ v_1 $, color=antiquewhite, x=0, y=0]{v1},
    \Vertex[label = $ v_2 $, color=antiquewhite, x=2, y=0]{v2},
    \Vertex[label = $ v_3 $, color=antiquewhite, x=1, y=1]{v3}
    \Edge[color=fandango, opacity=.4, label=$ 0.18 $](v1)(v2)
    \Edge[color=fandango, opacity=.6, label=$ 0.79 $](v1)(v3)
    \Edge[color=fandango, opacity=.6, label=$ 0.79 $](v2)(v3)
\end{tikzpicture}
        \caption{global connectivity}
    \end{subfigure}
    \caption{Comparison of local and global connectivity}
    \label{fig:connectivities}
\end{figure}
\vspace{-0.4cm}
\subsubsection{Construction of the task relator}
According to the previous discussions, implementing the task relator involves the design of two modules: A \emph{collaboration graph construction procedure} based on the extracted task features $\{M_i^t\}_{i \in V_c}$ and a \emph{global-connectivity-aware aggregation mechanism}. 

\begin{table*}[th]

\resizebox{\textwidth}{!}{
\small

\begin{tabular}{lccccccccc}
\toprule[2pt]
Dataset & \multicolumn{3}{c}{Cora} & \multicolumn{3}{c}{CiteSeer} & \multicolumn{3}{c}{PubMed} \\
\midrule
\# Clients & 10    & 30    & 50    & 10    & 30    & 50    & 10    & 30    & 50 \\
\midrule
Local & 46.88$\pm$1.23  & 66.45$\pm$0.81  & 70.32$\pm$0.68  & 51.42$\pm$1.75  & 59.06$\pm$1.64  & 61.40$\pm$1.45  & 76.75$\pm$0.20  & 77.46$\pm$0.20  & 76.02$\pm$0.33  \\
\midrule
FedAvg & 49.75$\pm$1.32  & 46.20$\pm$2.67  & 43.48$\pm$3.97  & 54.79$\pm$2.86  & 54.14$\pm$1.41  & 57.52$\pm$1.98  & 78.53$\pm$0.68  & 80.99$\pm$0.26  & 79.53$\pm$0.06  \\
FedProx  & 50.79$\pm$2.00  & 54.72$\pm$5.21  & 62.11$\pm$2.02  & 56.31$\pm$5.81  & 59.41$\pm$0.68  & 63.29$\pm$1.21  & 77.32$\pm$0.88  & 80.99$\pm$0.51  & 79.60$\pm$0.21  \\
\midrule
FedPer  & 52.83$\pm$0.55  & 67.15$\pm$0.85  & 70.27$\pm$0.34  & 57.14$\pm$1.45  & 62.21$\pm$1.80  & 63.26$\pm$1.95  & 79.85$\pm$0.31  & 80.59$\pm$0.06  & 80.28$\pm$0.13  \\
\midrule
FedPub  & 52.58$\pm$1.51  & 67.30$\pm$0.99  & 42.81$\pm$5.70  & 56.06$\pm$2.29  & 62.12$\pm$0.49  & 64.18$\pm$1.88  & 79.70$\pm$0.21  & 80.97$\pm$0.22  & 80.56$\pm$0.23  \\
FedSage & 49.25$\pm$0.50  & 59.42$\pm$1.03  & 59.99$\pm$0.23  & 55.54$\pm$6.95  & 55.63$\pm$7.00  & 62.73$\pm$1.09  & 77.87$\pm$0.50  & 80.97$\pm$0.24  & 79.36$\pm$0.73  \\
GCFL  & 49.52$\pm$0.33  & 46.78$\pm$4.32  & 45.55$\pm$6.03  & 56.03$\pm$2.04  & 53.91$\pm$0.38  & 56.43$\pm$0.41  & 76.03$\pm$2.04  & 79.58$\pm$0.13  & 78.68$\pm$0.15  \\
FedStar & 43.09$\pm$0.72  & 61.60$\pm$0.30  & 67.77$\pm$1.25  & 46.45$\pm$0.17  & 54.78$\pm$2.12  & 58.96$\pm$1.81  & 75.45$\pm$0.14  & 76.45$\pm$0.43  & 74.71$\pm$0.52  \\
\midrule
Ours  & \boldmath{}\textbf{53.26$\pm$1.42 }\unboldmath{} & \boldmath{}\textbf{67.88$\pm$1.09 }\unboldmath{} & \boldmath{}\textbf{70.41$\pm$0.51 }\unboldmath{} & \boldmath{}\textbf{58.19$\pm$1.82 }\unboldmath{} & \boldmath{}\textbf{62.30$\pm$1.33 }\unboldmath{} & \boldmath{}\textbf{64.58$\pm$0.55 }\unboldmath{} & \boldmath{}\textbf{79.90$\pm$0.53 }\unboldmath{} & \boldmath{}\textbf{81.65$\pm$0.34 }\unboldmath{} & \boldmath{}\textbf{80.82$\pm$0.20 }\unboldmath{} \\
\midrule[2pt]
Dataset & \multicolumn{3}{c}{Amazon Photo} & \multicolumn{3}{c}{Amazon Computers} & \multicolumn{3}{c}{Ogbn Arxiv} \\
\midrule
\# Clients & 10    & 30    & 50    & 10    & 30    & 50    & 10    & 30    & 50 \\
\midrule
Local & 46.57$\pm$0.15  & 69.25$\pm$0.25  & 79.42$\pm$0.34  & 51.82$\pm$0.62  & 65.69$\pm$0.94  & 68.57$\pm$0.35  & 34.76$\pm$0.50  & 46.98$\pm$0.18  & 47.45$\pm$0.19  \\
\midrule
FedAvg & 43.10$\pm$2.68  & 44.75$\pm$4.82  & 46.38$\pm$1.07  & 44.45$\pm$0.26  & 52.93$\pm$1.05  & 53.91$\pm$0.57  & 41.40$\pm$0.46  & 44.22$\pm$0.80  & 43.74$\pm$2.60  \\
FedProx  & 43.58$\pm$2.05  & 45.29$\pm$0.53  & 42.76$\pm$5.23  & 42.59$\pm$4.17  & 53.58$\pm$1.56  & 53.91$\pm$0.57  & 41.35$\pm$0.20  & 44.68$\pm$0.62  & 47.02$\pm$0.52  \\
\midrule
FedPer  & 52.20$\pm$0.99  & 71.76$\pm$0.57  & 81.65$\pm$0.06  & 55.04$\pm$0.33  & 67.55$\pm$1.42  & 68.79$\pm$0.51  & 38.77$\pm$0.44  & 47.46$\pm$0.18  & 50.51$\pm$0.15  \\
\midrule
FedPub  & 45.69$\pm$2.10  & 64.50$\pm$0.48  & 76.58$\pm$0.85  & 50.15$\pm$1.57  & 60.81$\pm$0.52  & 63.82$\pm$0.62  & 42.18$\pm$0.36  & 50.58$\pm$0.21  & 51.11$\pm$0.56  \\
FedSage & 47.79$\pm$0.76  & 58.26$\pm$2.35  & 58.99$\pm$1.58  & 47.98$\pm$0.84  & 56.82$\pm$0.43  & 63.13$\pm$0.86  & 42.18$\pm$0.11  & 45.43$\pm$0.40  & 46.08$\pm$0.27  \\
GCFL  & 46.93$\pm$0.55  & 48.95$\pm$0.48  & 40.76$\pm$8.08  & 56.02$\pm$1.04  & 58.38$\pm$2.16  & 52.97$\pm$1.74  & 41.32$\pm$0.23  & 46.26$\pm$1.12  & 46.16$\pm$0.34  \\
FedStar & 39.05$\pm$0.43  & 64.51$\pm$0.28  & 77.35$\pm$0.69  & 47.93$\pm$0.82  & 61.67$\pm$1.01  & 64.34$\pm$0.71  & 40.92$\pm$0.13  & 44.28$\pm$0.19  & 45.27$\pm$0.35  \\
\midrule
Ours  & \boldmath{}\textbf{56.07$\pm$1.44 }\unboldmath{} & \boldmath{}\textbf{72.15$\pm$0.49 }\unboldmath{} & \boldmath{}\textbf{81.91$\pm$0.29}\unboldmath{} & \boldmath{}\textbf{56.56$\pm$0.15 }\unboldmath{} & \boldmath{}\textbf{68.43$\pm$0.28}\unboldmath{} & \boldmath{}\textbf{69.11$\pm$1.07}\unboldmath{} & \boldmath{}\textbf{43.02$\pm$0.04 }\unboldmath{} & \boldmath{}\textbf{48.32$\pm$0.14}\unboldmath{} & \boldmath{}\textbf{52.38$\pm$0.12 }\unboldmath{} \\
\bottomrule[2pt]
\end{tabular}%
}

\caption{Node classification performance (\%) on overlapping datasets\label{tab:perf_over}}
\end{table*}
\begin{table*}[th]
\resizebox{\textwidth}{!}{
\small
\begin{tabular}{lccccccccc}
\toprule[2pt]
Dataset & \multicolumn{3}{c}{Cora} & \multicolumn{3}{c}{CiteSeer} & \multicolumn{3}{c}{PubMed} \\
\midrule
\# Clients & 5     & 10    & 20    & 5     & 10    & 20    & 5     & 10    & 20 \\
\midrule
Local & 80.44$\pm$1.77  & 79.58$\pm$1.07  & 79.46$\pm$0.51  & 71.28$\pm$1.32  & 68.93$\pm$0.79  & 70.49$\pm$0.88  & 84.98$\pm$0.67  & 83.34$\pm$0.42  & 82.92$\pm$0.41  \\
\midrule
FedAvg & 72.91$\pm$6.59  & 69.23$\pm$1.03  & 47.79$\pm$2.61  & 72.43$\pm$0.99  & 70.33$\pm$1.37  & 67.18$\pm$1.17  & 82.02$\pm$0.15  & 83.17$\pm$1.35  & 76.50$\pm$0.20  \\
FedProx  & 63.96$\pm$3.07  & 71.39$\pm$4.16  & 70.88$\pm$5.90  & 73.63$\pm$0.85  & 42.86$\pm$2.59  & 42.31$\pm$3.18  & 83.99$\pm$0.17  & 83.57$\pm$0.09  & 83.93$\pm$0.89  \\
\midrule
FedPer  & 81.37$\pm$1.58  & 76.73$\pm$0.95  & 77.24$\pm$1.62  & 70.45$\pm$2.00  & 67.14$\pm$6.95  & 71.13$\pm$0.76  & 85.59$\pm$0.18  & 85.42$\pm$0.12  & 83.75$\pm$0.14  \\
\midrule
FedPub  & 82.33$\pm$1.46  & 78.27$\pm$1.46  & 79.15$\pm$1.08  & 74.11$\pm$1.58  & 72.12$\pm$1.83  & 68.16$\pm$1.41  & 86.22$\pm$0.21  & 85.58$\pm$0.31  & 84.79$\pm$0.46  \\
FedSage & 72.07$\pm$0.36  & 69.66$\pm$0.27  & 59.28$\pm$0.38  & 70.64$\pm$3.04  & 65.54$\pm$6.95  & 63.02$\pm$1.49  & 84.64$\pm$0.60  & 83.39$\pm$1.29  & 84.92$\pm$0.45  \\
GCFL  & 79.91$\pm$1.93  & 73.25$\pm$4.39  & 76.37$\pm$1.82  & 71.37$\pm$2.54  & 67.58$\pm$0.61  & 63.54$\pm$3.34  & 84.24$\pm$0.57  & 83.47$\pm$0.29  & 83.72$\pm$0.47  \\
FedStar & 79.33$\pm$0.69  & 78.26$\pm$0.22  & 80.40$\pm$0.30  & 69.47$\pm$1.77  & 70.25$\pm$1.26  & 68.50$\pm$0.68  & 81.96$\pm$0.96  & 81.39$\pm$0.17  & 80.15$\pm$0.66  \\
\midrule
Ours  & \boldmath{}\textbf{83.37$\pm$1.59 }\unboldmath{} & \boldmath{}\textbf{80.06$\pm$1.27 }\unboldmath{} & \boldmath{}\textbf{81.17$\pm$0.63 }\unboldmath{} & \boldmath{}\textbf{75.25$\pm$1.38 }\unboldmath{} & \boldmath{}\textbf{74.18$\pm$1.14 }\unboldmath{} & \boldmath{}\textbf{71.17$\pm$1.69 }\unboldmath{} & \boldmath{}\textbf{87.05$\pm$0.19 }\unboldmath{} & \boldmath{}\textbf{86.53$\pm$0.73 }\unboldmath{} & \boldmath{}\textbf{86.38$\pm$0.30 }\unboldmath{} \\
\midrule[2pt]
Dataset & \multicolumn{3}{c}{Amazon Photo} & \multicolumn{3}{c}{Amazon Computers} & \multicolumn{3}{c}{Ogbn Arxiv} \\
\midrule
\# Clients & 5     & 10    & 20    & 5     & 10    & 20    & 5     & 10    & 20 \\
\midrule
Local & 77.97$\pm$0.29  & 86.14$\pm$1.05  & 86.37$\pm$0.17  & 65.90$\pm$0.29  & 74.41$\pm$1.51  & 81.81$\pm$0.50  & 56.93$\pm$0.89  & 56.54$\pm$0.37  & 57.79$\pm$0.89  \\
\midrule
FedAvg & 53.49$\pm$5.87 & 45.82$\pm$1.88  & 35.15$\pm$1.03  & 46.03$\pm$1.93 & 39.04$\pm$3.68  & 43.74$\pm$8.15  & 55.84$\pm$0.88  & 61.02$\pm$0.32  & 59.30$\pm$0.18  \\
FedProx  & 71.08$\pm$3.11  & 56.78$\pm$4.31  & 44.61$\pm$5.89  & 37.72$\pm$0.94  & 36.44$\pm$0.35  & 36.89$\pm$0.27  & 62.05$\pm$1.10  & 61.77$\pm$0.78 & 57.79$\pm$0.26  \\
\midrule
FedPer  & 68.19$\pm$1.68  & 77.15$\pm$0.14  & 78.96$\pm$0.68  & 64.30$\pm$0.34  & 64.47$\pm$0.20  & 70.44$\pm$0.57  & 61.57$\pm$0.50  & 61.52$\pm$0.37  & 62.73$\pm$0.26  \\
\midrule
FedPub  & 86.76$\pm$1.71  & 87.80$\pm$2.44  & 88.72$\pm$3.09  & 68.65$\pm$2.53  & 77.02$\pm$0.87  & 80.71$\pm$0.79  & 67.50$\pm$0.32  & 66.80$\pm$0.32  & 62.11$\pm$0.56  \\
FedSage & 51.28$\pm$7.30  & 51.68$\pm$7.28  & 51.39$\pm$7.22  & 42.88$\pm$5.23  & 50.41$\pm$7.84  & 57.06$\pm$0.42  & 58.63$\pm$1.29  & 61.65$\pm$0.45  & 54.86$\pm$1.77  \\
GCFL  & 68.17$\pm$8.37  & 82.74$\pm$3.15  & 87.55$\pm$2.28  & 55.36$\pm$2.50 & 72.53$\pm$1.38  & 82.87$\pm$1.83  & 59.75$\pm$3.46  & 63.63$\pm$0.36  & 55.35$\pm$4.58  \\
FedStar & 85.67$\pm$0.31  & 86.85$\pm$0.09  & 87.60$\pm$0.68  & 71.88$\pm$1.70  & 78.81$\pm$1.41  & 83.42$\pm$0.73  & 58.96$\pm$0.82  & 60.77$\pm$0.46  & 61.36$\pm$0.14  \\
\midrule
Ours  & \boldmath{}\textbf{89.16$\pm$0.04 }\unboldmath{} & \boldmath{}\textbf{88.83$\pm$0.85 }\unboldmath{} & \boldmath{}\textbf{89.53$\pm$0.73 }\unboldmath{} & \boldmath{}\textbf{72.75$\pm$2.16 }\unboldmath{} & \boldmath{}\textbf{82.68$\pm$0.73 }\unboldmath{} & \boldmath{}\textbf{84.23$\pm$0.41 }\unboldmath{} & \boldmath{}\textbf{68.52$\pm$0.14 }\unboldmath{} & \boldmath{}\textbf{67.87$\pm$0.27 }\unboldmath{} & \boldmath{}\textbf{65.27$\pm$0.51 }\unboldmath{} \\
\bottomrule[2pt]
\end{tabular}}

\caption{Node classification performance (\%) on non-overlapping datasets\label{tab:perf_non}}
\end{table*}
To form the collaboration graph, we use the following feature-wise average correlation as the pairwise task-relatedness:
\begin{align}
    r_{ij}^t = \frac{1}{d+D} \sum_{k = 1}^{d+D} \textsf{corr}\left(\mathbf{M}_i^t[:, k], \mathbf{M}_j^t[:, k]\right),
    \label{eq:r}
\end{align}
where \textsf{corr} stands for Pearson's correlation coefficient\cite{lehmann2006theory}.
 Next we discuss the construction of the global-connectivity-aware aggregation mechanism.
 {Inspired by the property of exponential kernels \cite{kondor2002diffusion} that translates local structure into global ones, we use an element-wise exponentiated matrix exponential with two temperature parameters $\tau$ and $\tau_s$: \begin{align}
 \label{Eq.kernel} 
 k(i, j) = e^{\tau_ss_{ij}}, \mat{S} = \{ s_{ij}\}_{i \in V, j \in V}= e^{\tau \mat{R}}=\sum^\infty_{\kappa=1}\frac{1}{\kappa!}(\tau \mat{R})^\kappa. 
 \end{align}
From the right hand side of \eqref{Eq.kernel}, we may interpret $s_{i,j}$ as encoding the relatedness of client $i$ and $j$ via conducting infinite rounds of message passing, thereby reflecting their \textit{global connectivity} structure. 
Then, $k(i,j)$ scales the global connectivity into $\mathbb{R}^+$ range for further normalization into the client-specific distribution $q_i$ over $[n]$. The parameter $\tau$ is used to control the level of personalization, where $\tau \rightarrow 0$ indicates no personalization (\fedavg) and $\tau \rightarrow \infty$ indicates local training. The parameter $\tau_s$ strikes a balance between the contribution of local and global information. It is worth noting that there are other notions of global connectivity measures, such as effective resistance \cite{black2023understanding}, which are also applicable. However, here we stick to the matrix exponential due to its loose requirements of arbitrary local similarity $\mat{R}$. We will prove in appendix \ref{sec:proof} that the function $k$ in \eqref{Eq.kernel} is a valid kernel over the domain $[n]$.

\subsection{Complexity considerations}
In comparison to vanilla \fedavg, the proposed framework requires additional local computations, a server-side matrix exponential operation, as well as extra communication costs. Let us briefly discuss the complexity of these procedures.
Firstly, the dataset distillation procedure operates on small-scale graphs, making the total computation cost negligible compared to local training.
Secondly, the matrix exponential operation upon $\mat{R}$ has a time complexity of $O(n^3)$. This complexity is controllable in practice since the number of clients, $n$, is typically small or moderate.
Finally, the extra communication cost per client per aggregation step depends on the formulation of the task feature map. According to \eqref{eqn:task_feature}, the extra communication cost is  $O(bmC(d+D))$, where $b$ is the number of bits required to represent a floating-point number. This cost is comparable to the communication cost of a GNN. It is noteworthy that since GNN models are typically shallow, the communication cost of parameters is often dominated by the computation cost of local training. Moreover, the communication cost can be further reduced if a more compact task feature, such as the distilled graph embedding, is used. We will empirically investigate such alternatives in section \ref{sec:ablation_feature}.

\section{Experiments\label{sec:ex}}
This section presents the empirical analysis of our framework, which includes a performance comparison, convergence analysis, and multiple ablation studies. 
\subsection{Experiment Setup}
\subsubsection{Datasets}
We have tested the performance on six different graph datasets of varying scales: Cora, CiteSeer, PubMed, Amazon Computer, Amazon Photo, and Ogbn arxiv \cite{yang2016revisiting,shchur2018pitfalls,NEURIPS2020_fb60d411}.
To split each graph into subgraphs, we employed the Metis graph partition algorithm following the setup in \cite{baek2022personalized}. Each client stores one subgraph from the original graph.
We have conducted a node classification task by sampling the vertices into training, validation and testing vertices according to ratio 0.3, 0.35 and 0.35 before splitting. Appendix \ref{app:dataset} provides detailed descriptive statistics of the datasets. 

\subsubsection{Baselines}
We compared our framework with eight different baselines, categorized into four types:
(1) \textbf{Local}, which serves as the standard baseline without federated learning;
(2) Two traditional FL baselines, including \textbf{FedAvg} \cite{mcmahan2017communication} and \textbf{FedProx} \cite{li2020federated};
(3) One state-of-the-art personalized FL baseline, \textbf{FedPer} \cite{arivazhagan2019federated};
(4) Four state-of-the-art personalized federated GNN baselines, including \textbf{FedPub} \cite{baek2022personalized}, \textbf{FedSage} \cite{zhang2021subgraph}, \textbf{GCFL} \cite{xie2021federated}, and \textbf{FedStar} \cite{tan2023federated}.
For detailed introductions of the baselines, please refer to Appendix \ref{app:base}.

\begin{figure*}[th]
\centering
\subfloat[Cora]{\includegraphics[width=0.166\textwidth]{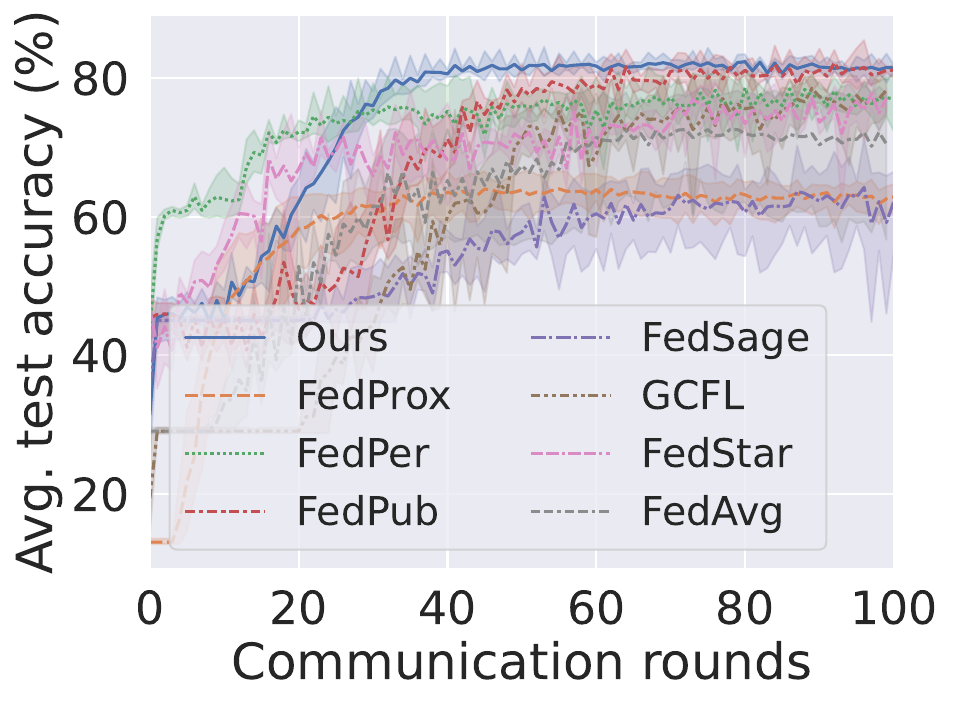}}
\hfill
\hspace{-0.5cm}
\subfloat[CiteSeer]{\includegraphics[width=0.166\textwidth]{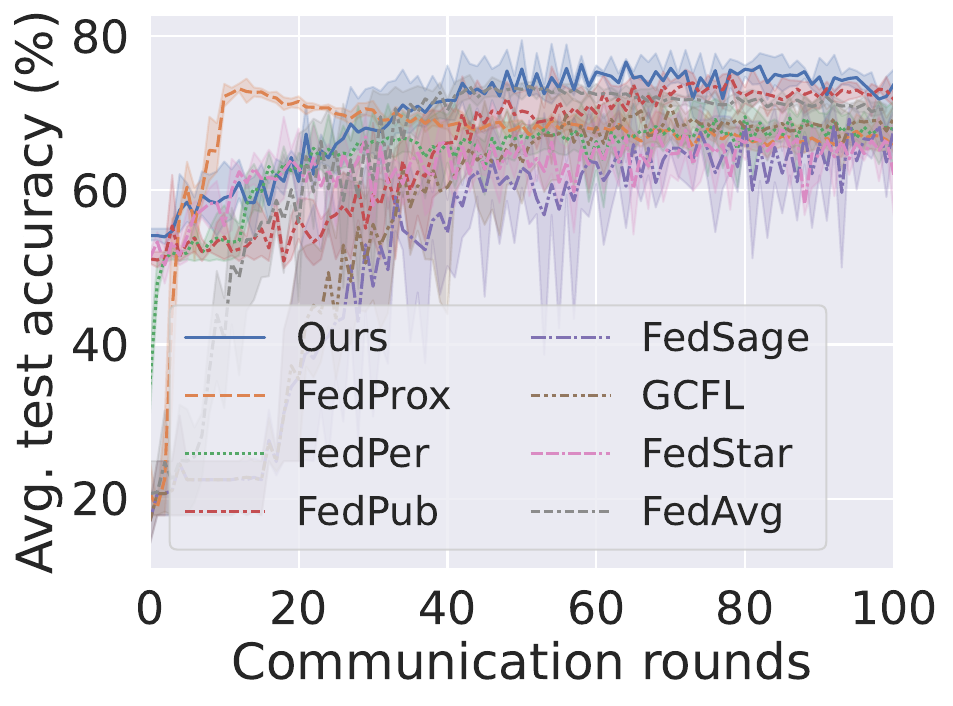}}
\hfill
\hspace{-0.5cm}
\subfloat[PubMed]{\includegraphics[width=0.166\textwidth]{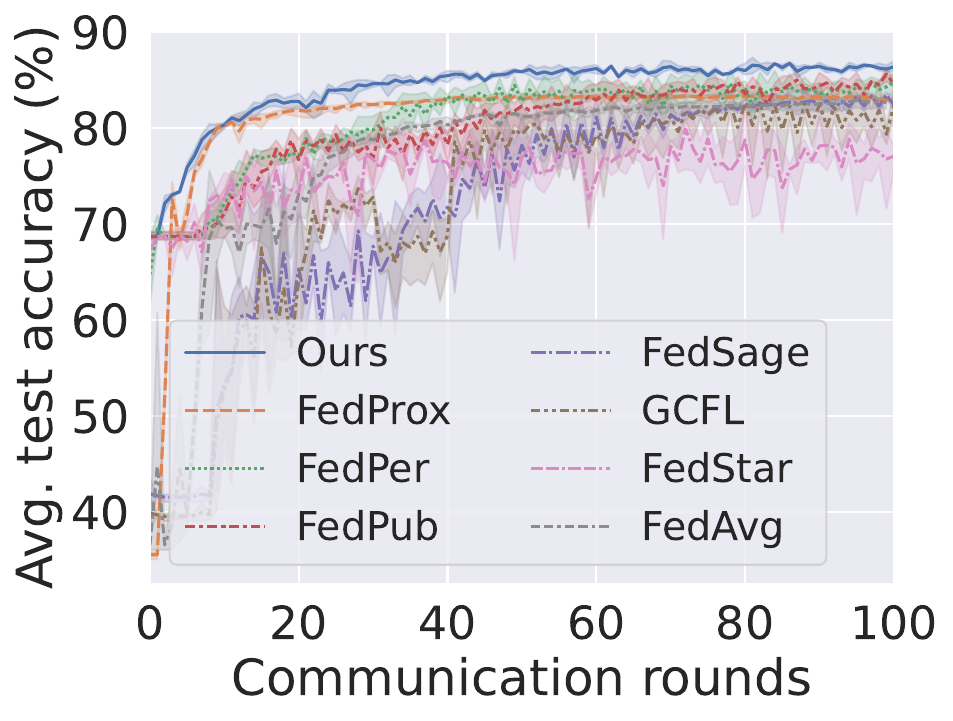}}
\hfill
\hspace{-0.5cm}
\subfloat[Photo]{\includegraphics[width=0.166\textwidth]{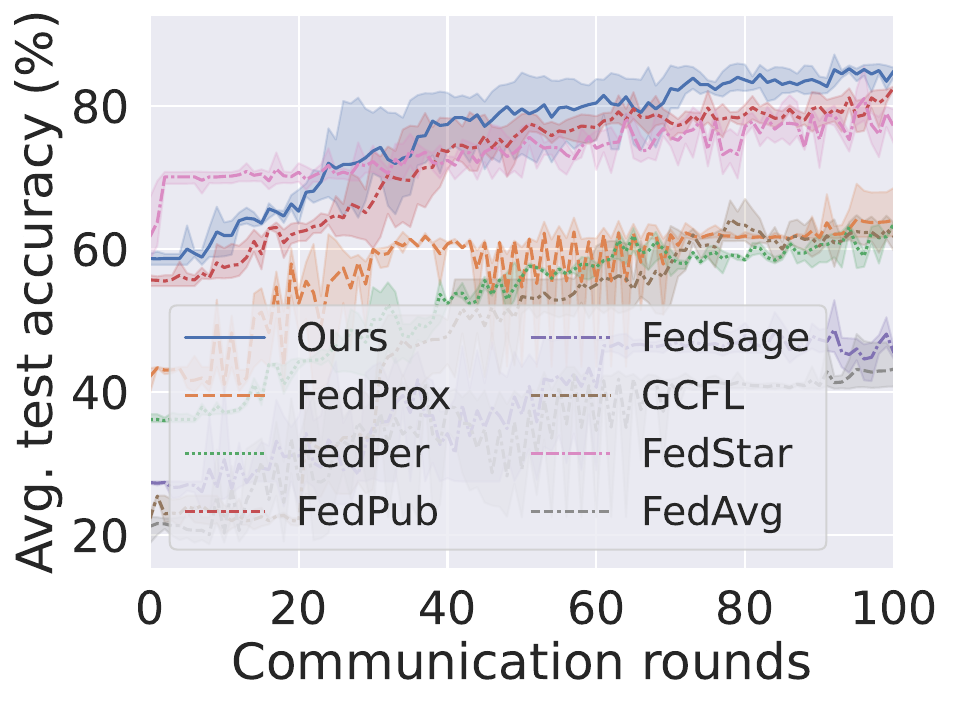}}
\hfill
\hspace{-0.5cm}
\subfloat[Computers]{\includegraphics[width=0.166\textwidth]{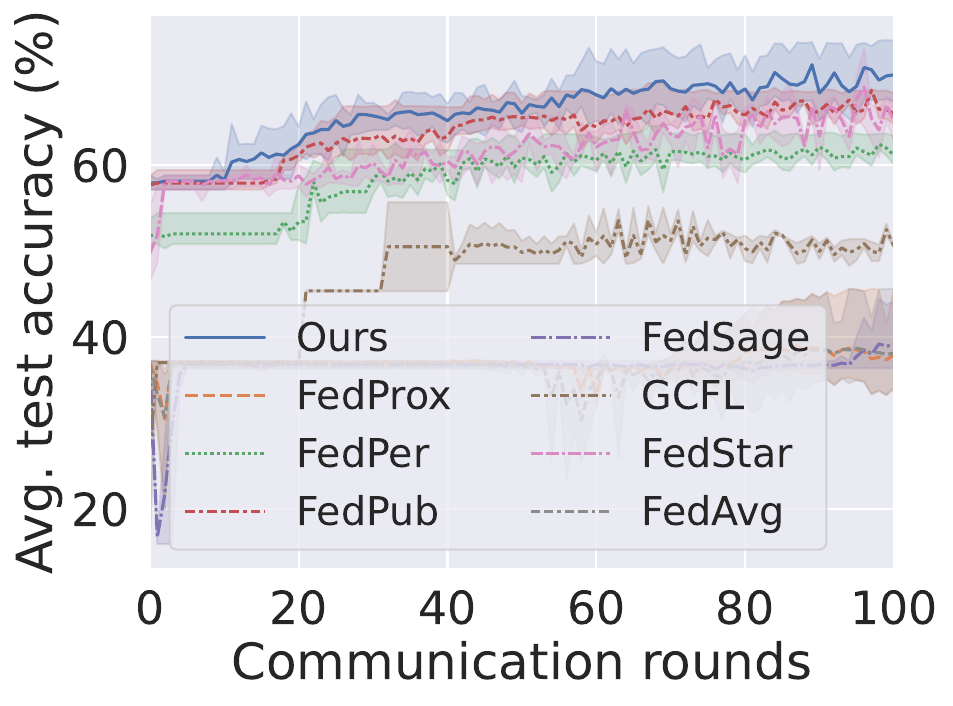}}
\hfill
\hspace{-0.5cm}
\subfloat[Arxiv]{\includegraphics[width=0.166\textwidth]{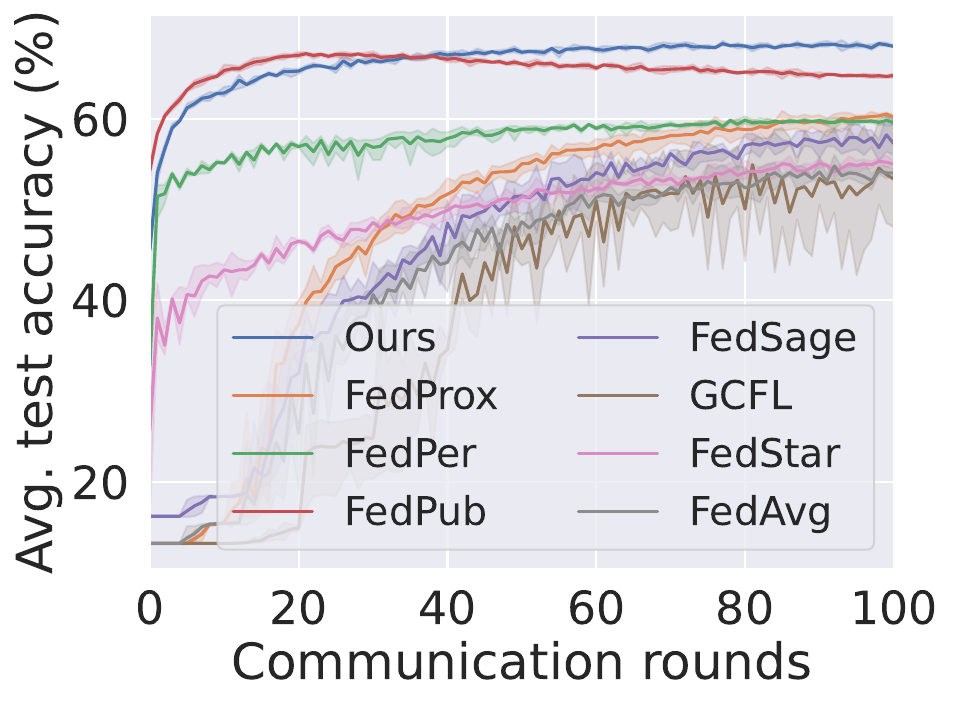}}
\caption{Convergence plot for the non-overlapping setting with 5 clients. We visualize the first 100 communication rounds. }
\label{fig:conv}
\end{figure*}
\begin{figure*}[ht]
\begin{minipage}{0.195\textwidth}
        \centering
    \includegraphics[width=\textwidth]{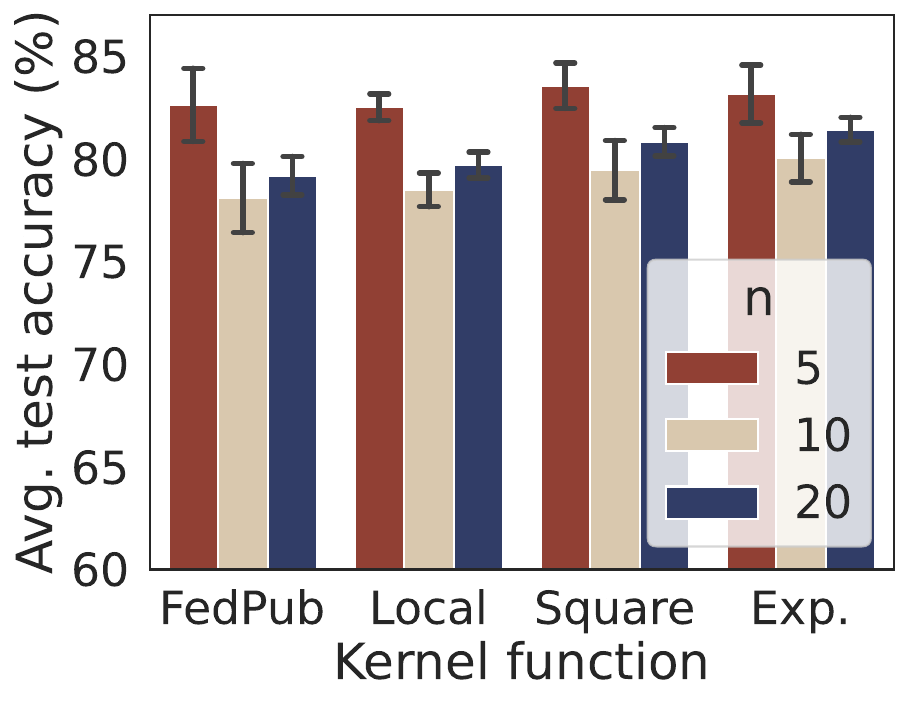}
    \caption{Effects of kernel functions\label{fig:kernel}\newline}
\end{minipage}
\begin{minipage}{0.195\textwidth}
        \centering
    \includegraphics[width=\textwidth]{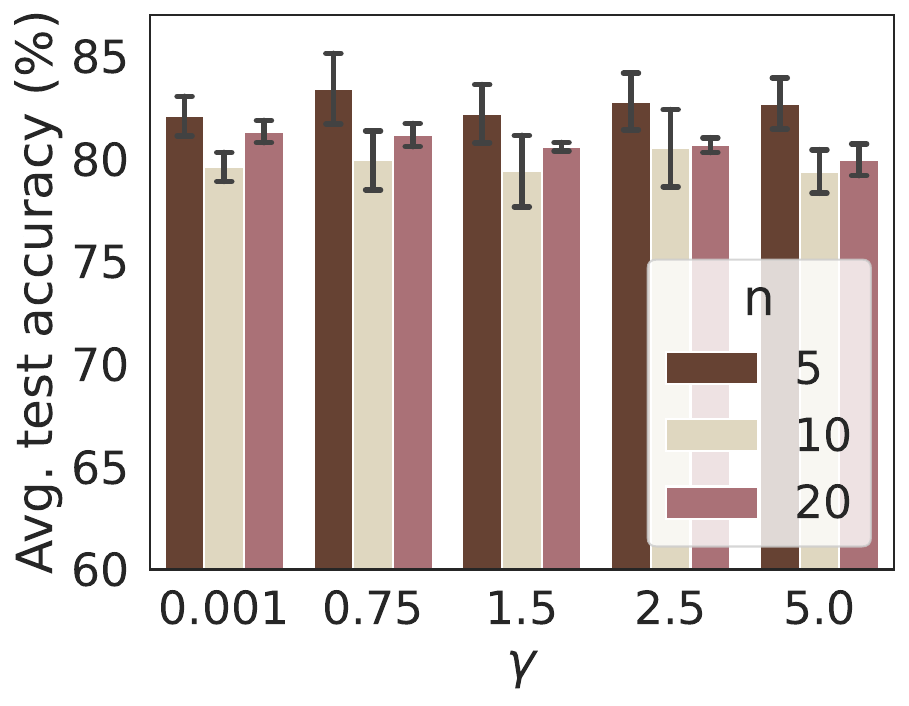}
    \caption{Effects of sparsity control coefficient $\gamma$\label{fig:gamma}\newline}
\end{minipage}
\begin{minipage}{0.195\textwidth}
        \centering
    \includegraphics[width=\textwidth]{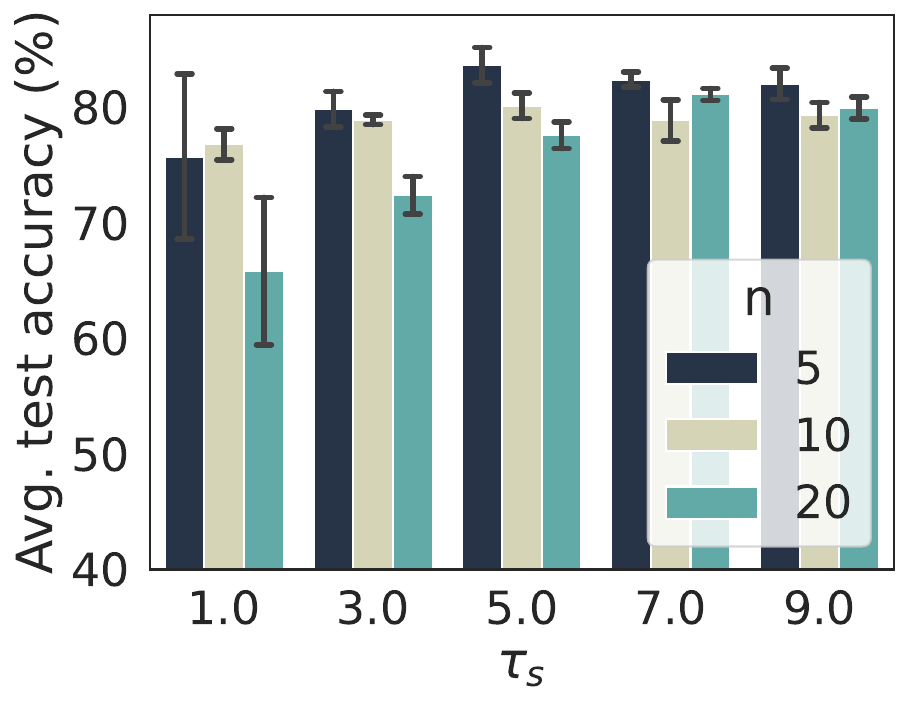}
    \caption{Effects of temperature on element-wise exponential\label{fig:taus} $\tau_s$}
\end{minipage}
\begin{minipage}{0.195\textwidth}
        \centering
    \includegraphics[width=\textwidth]{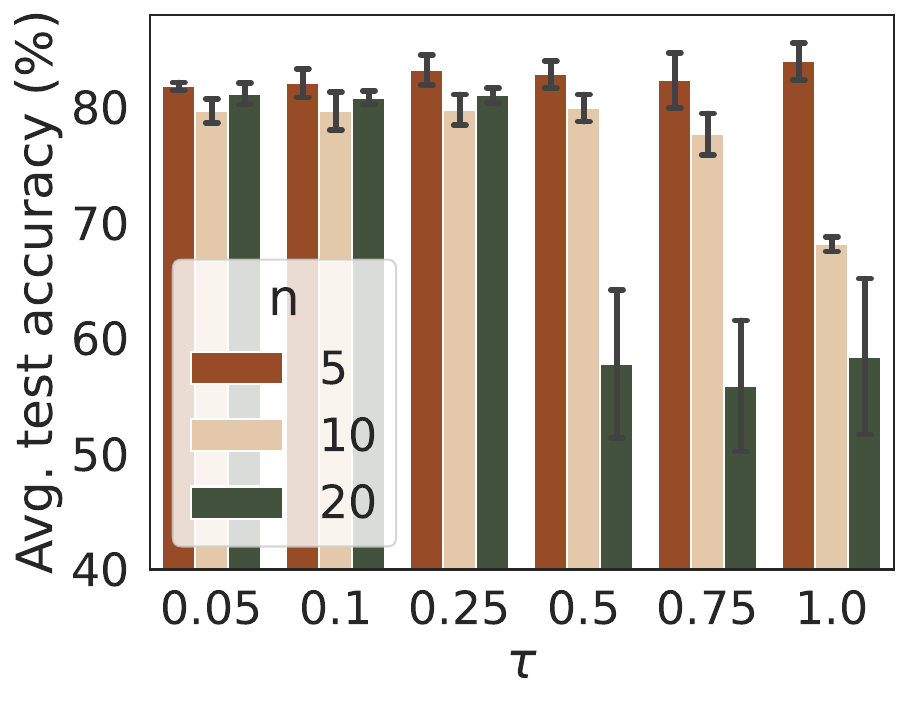}
    \caption{Effects of temperature on local connectivity matrix exponential\label{fig:tau} $\tau$}
\end{minipage}
\begin{minipage}{0.195\textwidth}
    \includegraphics[width=\textwidth]{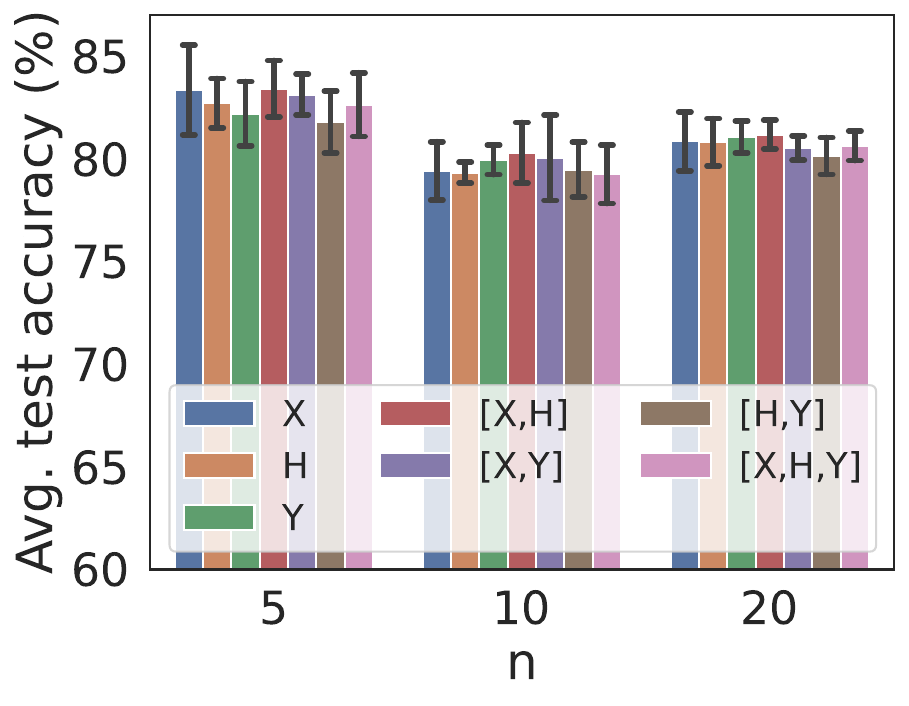}
    \caption{Effects of of task features from distilled graphs    \label{fig:xhy}}
\end{minipage}
\end{figure*}

\subsubsection{Implementation Details} We utilize a two-layer GCN \cite{kipf2016semi} followed by a linear READOUT layer. The dimension of the embeddings is set to 128. To optimize learning, we employ the Adam optimizer with weight decay $10^{-6}$ \cite{kingma2014adam}. The smoothing parameter $\tau_g$ in Gumbel-softmax is set to 1, following the technique in \cite{jang2016categorical}.
To monitor the training progress, we use an early-stop mechanism. If the validation accuracy decreases for 20 consecutive rounds, the FL framework stops immediately.
Each experiment is conducted over 3 runs with different random seeds. Implementation of all methods is done using PyTorch Geometric \cite{fey2019fast} on an NVIDIA Tesla V100 GPU. For further details, please refer to Appendix \ref{app:pc}.

\subsection{Results}
\subsubsection{Performance Comparison} 
We evaluate the node classification performance of various frameworks using six real-world datasets of differing scales. Tables \ref{tab:perf_over} and \ref{tab:perf_non} present the average test accuracy and its standard deviation for overlapping and non-overlapping settings, respectively. Our framework consistently outperforms all other methods across all datasets.
Traditional FL baselines, such as FedAvg and FedProx, which lack local adaptation, are consistently inferior to our Local framework in multiple scenarios. Despite using local task statistics for personalization, the feature extraction schemes of FedPer, FedStar, FedSage, and GCFL are less effective and perform worse than our framework.
FedPub performs worse than our method due to the absence of information contained in local READOUT layers when describing tasks and inefficient exploitation of the global collaboration structure.

\subsubsection{Convergence Analysis}
Fig.\ref{fig:conv} illustrates the convergence behavior of the average test accuracy over the first 100 rounds with 5 clients in the system. It is evident that our proposed framework exhibits a rapid convergence rate towards the highest average test accuracy. This can be attributed to the framework's ability to efficiently capture the local task features and identify pairwise relationships.


\subsubsection{Effects of the proposed task relator}
The primary goal of this study is to investigate the benefits of incorporating the global information. Specifically, we tested with FedGKD along with two local variants obtained via replacing the matrix $S$ in \eqref{Eq.kernel} on the Cora dataset. The \emph{local} variant corresponds to the standard softmax kernel that sets $\mathbf{S} = \mathbf{R}$. The \emph{square} variant corresponds to setting $\mathbf{S}=\mathbf{R}^2$, which can be  understood as performing a two-layer message passing. As shown in Fig.\ref{fig:kernel}, we observe that incorporating global information leads to improved federated learning performance, especially when the number of clients is large. This implies that inter-client information is more complicated, and the proposed task relater provides a more nuanced solution. Furthermore, 
we also compare these variations with FedPub framework and observe that even using local connectivity generated from distilled datasets instead of graph embeddings on a random graph input to relate tasks, our framework outperforms FedPub. This suggests that distilled graphs are more representative than graph embeddings due to their incorporation of READOUT layers.

\subsubsection{Effects of Sparsity-controlling Coefficient $\gamma$}
We conduct an ablation study on Cora to assess the impact of the sparsity control coefficient $\gamma$ in distilled graphs. A larger $\gamma$ generates a more sparse distilled graph. We vary $\gamma$ across $\{10^{-3}, 0.75, 1.5, 2.5, 5 \}$ to test the effect. Our findings show that the optimum value for $\gamma$ in our framework is 0.75. We hypothesize that distilled graphs' density is influenced by the constraint of containing few nodes, as a sparse small graph results in almost no connections. This observation is consistent with results from centralized graph distillation\cite{jin2022graph,jin2022condensing}.

\subsubsection{Effects of Temperature on Element-wise Exponential $\tau_s$}
We investigate the impact of varying the temperature values on the element-wise exponential $\tau_s$ defined in \eqref{Eq.kernel} on Cora dataset. $\tau_s$ is a metric that regulates the influence of the local model weights $W_i^t$ on the aggregated weights $\overline{W_i^t}$. In a federated GNN system with significantly heterogeneous local datasets, a large value of $\tau_s$ is required to achieve optimal performance. This idea is supported by the results presented in Fig.\ref{fig:taus}. In Appendix \ref{app:dataset}, we demonstrate that the number of clients results in more heterogeneous graphs within the system, necessitating a larger value of $\tau_s$ to attain optimal performance in FedGKD.

\subsubsection{Effects of Temperature on Matrix Exponential $\tau$}
We experiment with varying the values of $\tau$ in the exponential of the relation matrix, as defined in \eqref{Eq.kernel} on Cora. $\tau$ is introduced to avoid the singularity of the matrix exponential. As shown in Fig.\ref{fig:tau}, 
a large $\tau$ may results in extremely low rank aggregation weight matrix thereby deteriorating model performance. Therefore, it is essential to set an appropriate value of $\tau$ to guarantee non-singularity.

 \subsubsection{Effects of Task Features from Distilled Graphs}\label{sec:ablation_feature}
We experiment with multiple choices of statistics obtained from distilled 
graphs to compute the pairwise task-relatedness in
 \eqref{eq:r} on Cora dataset. Fig.\ref{fig:xhy} illustrates that choices of statistics are robust to model performance but the concatenation of node feature $\mat{X}$ and embeddings $\mat{H}$ outperforms others slightly. 
It is worth noting that if communication overhead is a major concern, we can further reduce the extra communication cost by transmitting only $\mat{H}$ or even the distilled labels, which incurs only a slight performance degradation.

\subsubsection{Additional experiments} 
We report an ablation study regarding the comparison of static versus dynamic dataset distillation strategy in appendix \ref{sec:static_vs_dynamci}. The results suggest that dynamic strategy is preferred.

\section{Conclusion\label{sec:con}}
Our paper proposes a novel framework that overcomes the limitations of existing federated GNN frameworks in local task featuring and task relating. We utilize graph distillation in task featuring, and introduce a novel kernelized attentive aggregation mechanism based on a collaborated network to incorporate global connectivity during model aggregation. The extensive experimental results demonstrate that our framework outperforms state-of-the-art methods.
\bibliographystyle{ACM-Reference-Format}

\begin{thebibliography}{49}


\ifx \showCODEN    \undefined \def \showCODEN     #1{\unskip}     \fi
\ifx \showDOI      \undefined \def \showDOI       #1{#1}\fi
\ifx \showISBNx    \undefined \def \showISBNx     #1{\unskip}     \fi
\ifx \showISBNxiii \undefined \def \showISBNxiii  #1{\unskip}     \fi
\ifx \showISSN     \undefined \def \showISSN      #1{\unskip}     \fi
\ifx \showLCCN     \undefined \def \showLCCN      #1{\unskip}     \fi
\ifx \shownote     \undefined \def \shownote      #1{#1}          \fi
\ifx \showarticletitle \undefined \def \showarticletitle #1{#1}   \fi
\ifx \showURL      \undefined \def \showURL       {\relax}        \fi
\providecommand\bibfield[2]{#2}
\providecommand\bibinfo[2]{#2}
\providecommand\natexlab[1]{#1}
\providecommand\showeprint[2][]{arXiv:#2}

\bibitem[Arivazhagan et~al\mbox{.}(2019)]%
        {arivazhagan2019federated}
\bibfield{author}{\bibinfo{person}{Manoj~Ghuhan Arivazhagan},
  \bibinfo{person}{Vinay Aggarwal}, \bibinfo{person}{Aaditya~Kumar Singh},
  {and} \bibinfo{person}{Sunav Choudhary}.} \bibinfo{year}{2019}\natexlab{}.
\newblock \showarticletitle{Federated learning with personalization layers}.
\newblock \bibinfo{journal}{\emph{arXiv preprint arXiv:1912.00818}}
  (\bibinfo{year}{2019}).
\newblock


\bibitem[Baek et~al\mbox{.}(2022)]%
        {baek2022personalized}
\bibfield{author}{\bibinfo{person}{Jinheon Baek}, \bibinfo{person}{Wonyong
  Jeong}, \bibinfo{person}{Jiongdao Jin}, \bibinfo{person}{Jaehong Yoon}, {and}
  \bibinfo{person}{Sung~Ju Hwang}.} \bibinfo{year}{2022}\natexlab{}.
\newblock \showarticletitle{Personalized Subgraph Federated Learning}. In
  \bibinfo{booktitle}{\emph{Proc. ICML}}.
\newblock


\bibitem[Bao et~al\mbox{.}(2023)]%
        {bao2023optimizing}
\bibfield{author}{\bibinfo{person}{Wenxuan Bao}, \bibinfo{person}{Haohan Wang},
  \bibinfo{person}{Jun Wu}, {and} \bibinfo{person}{Jingrui He}.}
  \bibinfo{year}{2023}\natexlab{}.
\newblock \showarticletitle{Optimizing the Collaboration Structure in
  Cross-Silo Federated Learning}. In \bibinfo{booktitle}{\emph{Proc. ICML}}.
\newblock


\bibitem[Black et~al\mbox{.}(2023)]%
        {black2023understanding}
\bibfield{author}{\bibinfo{person}{Mitchell Black}, \bibinfo{person}{Zhengchao
  Wan}, \bibinfo{person}{Amir Nayyeri}, {and} \bibinfo{person}{Yusu Wang}.}
  \bibinfo{year}{2023}\natexlab{}.
\newblock \showarticletitle{Understanding oversquashing in gnns through the
  lens of effective resistance}. In \bibinfo{booktitle}{\emph{Proc. ICML}}.
  PMLR, \bibinfo{pages}{2528--2547}.
\newblock


\bibitem[Chen et~al\mbox{.}(2022b)]%
        {chen2022structure}
\bibfield{author}{\bibinfo{person}{Dexiong Chen}, \bibinfo{person}{Leslie
  O’Bray}, {and} \bibinfo{person}{Karsten Borgwardt}.}
  \bibinfo{year}{2022}\natexlab{b}.
\newblock \showarticletitle{Structure-aware transformer for graph
  representation learning}. In \bibinfo{booktitle}{\emph{International
  Conference on Machine Learning}}. PMLR, \bibinfo{pages}{3469--3489}.
\newblock


\bibitem[Chen et~al\mbox{.}(2022a)]%
        {chen2022personalized}
\bibfield{author}{\bibinfo{person}{Fengwen Chen}, \bibinfo{person}{Guodong
  Long}, \bibinfo{person}{Zonghan Wu}, \bibinfo{person}{Tianyi Zhou}, {and}
  \bibinfo{person}{Jing Jiang}.} \bibinfo{year}{2022}\natexlab{a}.
\newblock \showarticletitle{Personalized federated learning with graph}. In
  \bibinfo{booktitle}{\emph{Proc. IJCAI}}.
\newblock


\bibitem[Chen et~al\mbox{.}(2021)]%
        {chen2021theorem}
\bibfield{author}{\bibinfo{person}{Shuxiao Chen}, \bibinfo{person}{Qinqing
  Zheng}, \bibinfo{person}{Qi Long}, {and} \bibinfo{person}{Weijie~J Su}.}
  \bibinfo{year}{2021}\natexlab{}.
\newblock \showarticletitle{A theorem of the alternative for personalized
  federated learning}.
\newblock \bibinfo{journal}{\emph{arXiv preprint arXiv:2103.01901}}
  (\bibinfo{year}{2021}).
\newblock


\bibitem[Choromanski et~al\mbox{.}(2020)]%
        {choromanski2020rethinking}
\bibfield{author}{\bibinfo{person}{Krzysztof Choromanski},
  \bibinfo{person}{Valerii Likhosherstov}, \bibinfo{person}{David Dohan},
  \bibinfo{person}{Xingyou Song}, \bibinfo{person}{Andreea Gane},
  \bibinfo{person}{Tamas Sarlos}, \bibinfo{person}{Peter Hawkins},
  \bibinfo{person}{Jared Davis}, \bibinfo{person}{Afroz Mohiuddin},
  \bibinfo{person}{Lukasz Kaiser}, {et~al\mbox{.}}}
  \bibinfo{year}{2020}\natexlab{}.
\newblock \showarticletitle{Rethinking attention with performers}.
\newblock \bibinfo{journal}{\emph{arXiv preprint arXiv:2009.14794}}
  (\bibinfo{year}{2020}).
\newblock


\bibitem[Dong et~al\mbox{.}(2022)]%
        {dong2022privacy}
\bibfield{author}{\bibinfo{person}{Tian Dong}, \bibinfo{person}{Bo Zhao}, {and}
  \bibinfo{person}{Lingjuan Lyu}.} \bibinfo{year}{2022}\natexlab{}.
\newblock \showarticletitle{Privacy for free: How does dataset condensation
  help privacy?}. In \bibinfo{booktitle}{\emph{International Conference on
  Machine Learning}}. PMLR, \bibinfo{pages}{5378--5396}.
\newblock


\bibitem[Duan and Wang(2022)]%
        {duan2022adaptive}
\bibfield{author}{\bibinfo{person}{Yaqi Duan} {and} \bibinfo{person}{Kaizheng
  Wang}.} \bibinfo{year}{2022}\natexlab{}.
\newblock \showarticletitle{Adaptive and robust multi-task learning}.
\newblock \bibinfo{journal}{\emph{arXiv preprint arXiv:2202.05250}}
  (\bibinfo{year}{2022}).
\newblock


\bibitem[Fey and Lenssen(2019)]%
        {fey2019fast}
\bibfield{author}{\bibinfo{person}{Matthias Fey} {and}
  \bibinfo{person}{Jan~Eric Lenssen}.} \bibinfo{year}{2019}\natexlab{}.
\newblock \showarticletitle{Fast graph representation learning with PyTorch
  Geometric}.
\newblock \bibinfo{journal}{\emph{arXiv preprint arXiv:1903.02428}}
  (\bibinfo{year}{2019}).
\newblock


\bibitem[Gilmer et~al\mbox{.}(2017)]%
        {pmlr-v70-gilmer17a}
\bibfield{author}{\bibinfo{person}{Justin Gilmer}, \bibinfo{person}{Samuel~S.
  Schoenholz}, \bibinfo{person}{Patrick~F. Riley}, \bibinfo{person}{Oriol
  Vinyals}, {and} \bibinfo{person}{George~E. Dahl}.}
  \bibinfo{year}{2017}\natexlab{}.
\newblock \showarticletitle{Neural Message Passing for Quantum Chemistry}. In
  \bibinfo{booktitle}{\emph{Proceedings of the 34th International Conference on
  Machine Learning}} \emph{(\bibinfo{series}{Proceedings of Machine Learning
  Research}, Vol.~\bibinfo{volume}{70})},
  \bibfield{editor}{\bibinfo{person}{Doina Precup} {and}
  \bibinfo{person}{Yee~Whye Teh}} (Eds.). \bibinfo{publisher}{PMLR},
  \bibinfo{address}{International Convention Centre, Sydney, Australia},
  \bibinfo{pages}{1263--1272}.
\newblock


\bibitem[Girvan and Newman(2002)]%
        {girvan2002community}
\bibfield{author}{\bibinfo{person}{Michelle Girvan} {and}
  \bibinfo{person}{Mark~EJ Newman}.} \bibinfo{year}{2002}\natexlab{}.
\newblock \showarticletitle{Community structure in social and biological
  networks}.
\newblock \bibinfo{journal}{\emph{Proceedings of the national academy of
  sciences}} \bibinfo{volume}{99}, \bibinfo{number}{12} (\bibinfo{year}{2002}),
  \bibinfo{pages}{7821--7826}.
\newblock


\bibitem[Hanneke and Kpotufe(2022)]%
        {hanneke2022no}
\bibfield{author}{\bibinfo{person}{Steve Hanneke} {and} \bibinfo{person}{Samory
  Kpotufe}.} \bibinfo{year}{2022}\natexlab{}.
\newblock \showarticletitle{A no-free-lunch theorem for multitask learning}.
\newblock \bibinfo{journal}{\emph{The Annals of Statistics}}
  \bibinfo{volume}{50}, \bibinfo{number}{6} (\bibinfo{year}{2022}),
  \bibinfo{pages}{3119--3143}.
\newblock


\bibitem[Hastie et~al\mbox{.}(2009)]%
        {hastie2009elements}
\bibfield{author}{\bibinfo{person}{Trevor Hastie}, \bibinfo{person}{Robert
  Tibshirani}, \bibinfo{person}{Jerome~H Friedman}, {and}
  \bibinfo{person}{Jerome~H Friedman}.} \bibinfo{year}{2009}\natexlab{}.
\newblock \bibinfo{booktitle}{\emph{The elements of statistical learning: data
  mining, inference, and prediction}}. Vol.~\bibinfo{volume}{2}.
\newblock \bibinfo{publisher}{Springer}.
\newblock


\bibitem[Horn and Johnson(2012)]%
        {horn2012matrix}
\bibfield{author}{\bibinfo{person}{Roger~A Horn} {and}
  \bibinfo{person}{Charles~R Johnson}.} \bibinfo{year}{2012}\natexlab{}.
\newblock \bibinfo{booktitle}{\emph{Matrix analysis}}.
\newblock \bibinfo{publisher}{Cambridge university press}.
\newblock


\bibitem[Hu et~al\mbox{.}(2020)]%
        {NEURIPS2020_fb60d411}
\bibfield{author}{\bibinfo{person}{Weihua Hu}, \bibinfo{person}{Matthias Fey},
  \bibinfo{person}{Marinka Zitnik}, \bibinfo{person}{Yuxiao Dong},
  \bibinfo{person}{Hongyu Ren}, \bibinfo{person}{Bowen Liu},
  \bibinfo{person}{Michele Catasta}, {and} \bibinfo{person}{Jure Leskovec}.}
  \bibinfo{year}{2020}\natexlab{}.
\newblock \showarticletitle{Open Graph Benchmark: Datasets for Machine Learning
  on Graphs}. In \bibinfo{booktitle}{\emph{Proc. NeurIPS}}.
  \bibinfo{pages}{22118--22133}.
\newblock


\bibitem[Jang et~al\mbox{.}(2016)]%
        {jang2016categorical}
\bibfield{author}{\bibinfo{person}{Eric Jang}, \bibinfo{person}{Shixiang Gu},
  {and} \bibinfo{person}{Ben Poole}.} \bibinfo{year}{2016}\natexlab{}.
\newblock \showarticletitle{Categorical reparameterization with
  gumbel-softmax}.
\newblock \bibinfo{journal}{\emph{arXiv preprint arXiv:1611.01144}}
  (\bibinfo{year}{2016}).
\newblock


\bibitem[Jin et~al\mbox{.}(2022a)]%
        {jin2022condensing}
\bibfield{author}{\bibinfo{person}{Wei Jin}, \bibinfo{person}{Xianfeng Tang},
  \bibinfo{person}{Haoming Jiang}, \bibinfo{person}{Zheng Li},
  \bibinfo{person}{Danqing Zhang}, \bibinfo{person}{Jiliang Tang}, {and}
  \bibinfo{person}{Bing Yin}.} \bibinfo{year}{2022}\natexlab{a}.
\newblock \showarticletitle{Condensing graphs via one-step gradient matching}.
  In \bibinfo{booktitle}{\emph{Proc. ACM SIGKDD}}. \bibinfo{pages}{720--730}.
\newblock


\bibitem[Jin et~al\mbox{.}(2022b)]%
        {jin2022graph}
\bibfield{author}{\bibinfo{person}{Wei Jin}, \bibinfo{person}{Lingxiao Zhao},
  \bibinfo{person}{Shichang Zhang}, \bibinfo{person}{Yozen Liu},
  \bibinfo{person}{Jiliang Tang}, {and} \bibinfo{person}{Neil Shah}.}
  \bibinfo{year}{2022}\natexlab{b}.
\newblock \showarticletitle{Graph Condensation for Graph Neural Networks}. In
  \bibinfo{booktitle}{\emph{Proc. ICLR}}.
\newblock


\bibitem[Kairouz et~al\mbox{.}(2021)]%
        {kairouz2021advances}
\bibfield{author}{\bibinfo{person}{Peter Kairouz}, \bibinfo{person}{H~Brendan
  McMahan}, \bibinfo{person}{Brendan Avent}, \bibinfo{person}{Aur{\'e}lien
  Bellet}, \bibinfo{person}{Mehdi Bennis}, \bibinfo{person}{Arjun~Nitin
  Bhagoji}, \bibinfo{person}{Kallista Bonawitz}, \bibinfo{person}{Zachary
  Charles}, \bibinfo{person}{Graham Cormode}, \bibinfo{person}{Rachel
  Cummings}, {et~al\mbox{.}}} \bibinfo{year}{2021}\natexlab{}.
\newblock \showarticletitle{Advances and open problems in federated learning}.
\newblock \bibinfo{journal}{\emph{Foundations and Trends{\textregistered} in
  Machine Learning}} \bibinfo{volume}{14}, \bibinfo{number}{1--2}
  (\bibinfo{year}{2021}), \bibinfo{pages}{1--210}.
\newblock


\bibitem[Karimireddy et~al\mbox{.}(2020)]%
        {karimireddy2020scaffold}
\bibfield{author}{\bibinfo{person}{Sai~Praneeth Karimireddy},
  \bibinfo{person}{Satyen Kale}, \bibinfo{person}{Mehryar Mohri},
  \bibinfo{person}{Sashank Reddi}, \bibinfo{person}{Sebastian Stich}, {and}
  \bibinfo{person}{Ananda~Theertha Suresh}.} \bibinfo{year}{2020}\natexlab{}.
\newblock \showarticletitle{Scaffold: Stochastic controlled averaging for
  federated learning}. In \bibinfo{booktitle}{\emph{International conference on
  machine learning}}. PMLR, \bibinfo{pages}{5132--5143}.
\newblock


\bibitem[Kingma and Ba(2014)]%
        {kingma2014adam}
\bibfield{author}{\bibinfo{person}{Diederik~P Kingma} {and}
  \bibinfo{person}{Jimmy Ba}.} \bibinfo{year}{2014}\natexlab{}.
\newblock \showarticletitle{Adam: A method for stochastic optimization}.
\newblock \bibinfo{journal}{\emph{arXiv preprint arXiv:1412.6980}}
  (\bibinfo{year}{2014}).
\newblock


\bibitem[Kipf and Welling(2017)]%
        {kipf2016semi}
\bibfield{author}{\bibinfo{person}{Thomas~N Kipf} {and} \bibinfo{person}{Max
  Welling}.} \bibinfo{year}{2017}\natexlab{}.
\newblock \showarticletitle{Semi-supervised classification with graph
  convolutional networks}. In \bibinfo{booktitle}{\emph{Proc. ICLR}}.
\newblock


\bibitem[Kondor and Lafferty(2002)]%
        {kondor2002diffusion}
\bibfield{author}{\bibinfo{person}{Risi~Imre Kondor} {and}
  \bibinfo{person}{John Lafferty}.} \bibinfo{year}{2002}\natexlab{}.
\newblock \showarticletitle{Diffusion kernels on graphs and other discrete
  structures}. In \bibinfo{booktitle}{\emph{Proc. ICML}},
  Vol.~\bibinfo{volume}{2002}. \bibinfo{pages}{315--322}.
\newblock


\bibitem[Lehmann and Casella(2006)]%
        {lehmann2006theory}
\bibfield{author}{\bibinfo{person}{Erich~L Lehmann} {and}
  \bibinfo{person}{George Casella}.} \bibinfo{year}{2006}\natexlab{}.
\newblock \bibinfo{booktitle}{\emph{Theory of point estimation}}.
\newblock \bibinfo{publisher}{Springer Science \& Business Media}.
\newblock


\bibitem[Li et~al\mbox{.}(2021)]%
        {li2021ditto}
\bibfield{author}{\bibinfo{person}{Tian Li}, \bibinfo{person}{Shengyuan Hu},
  \bibinfo{person}{Ahmad Beirami}, {and} \bibinfo{person}{Virginia Smith}.}
  \bibinfo{year}{2021}\natexlab{}.
\newblock \showarticletitle{Ditto: Fair and robust federated learning through
  personalization}. In \bibinfo{booktitle}{\emph{Proc. ICML}}. PMLR,
  \bibinfo{pages}{6357--6368}.
\newblock


\bibitem[Li et~al\mbox{.}(2020)]%
        {li2020federated}
\bibfield{author}{\bibinfo{person}{Tian Li}, \bibinfo{person}{Anit~Kumar Sahu},
  \bibinfo{person}{Manzil Zaheer}, \bibinfo{person}{Maziar Sanjabi},
  \bibinfo{person}{Ameet Talwalkar}, {and} \bibinfo{person}{Virginia Smith}.}
  \bibinfo{year}{2020}\natexlab{}.
\newblock \showarticletitle{Federated optimization in heterogeneous networks}.
  In \bibinfo{booktitle}{\emph{Proc. MLSys}}, Vol.~\bibinfo{volume}{2}.
  \bibinfo{pages}{429--450}.
\newblock


\bibitem[Long et~al\mbox{.}(2023)]%
        {long2023multi}
\bibfield{author}{\bibinfo{person}{Guodong Long}, \bibinfo{person}{Ming Xie},
  \bibinfo{person}{Tao Shen}, \bibinfo{person}{Tianyi Zhou},
  \bibinfo{person}{Xianzhi Wang}, {and} \bibinfo{person}{Jing Jiang}.}
  \bibinfo{year}{2023}\natexlab{}.
\newblock \showarticletitle{Multi-center federated learning: clients clustering
  for better personalization}.
\newblock \bibinfo{journal}{\emph{World Wide Web}} \bibinfo{volume}{26},
  \bibinfo{number}{1} (\bibinfo{year}{2023}), \bibinfo{pages}{481--500}.
\newblock


\bibitem[Maddison et~al\mbox{.}(2016)]%
        {maddison2016concrete}
\bibfield{author}{\bibinfo{person}{Chris~J Maddison}, \bibinfo{person}{Andriy
  Mnih}, {and} \bibinfo{person}{Yee~Whye Teh}.}
  \bibinfo{year}{2016}\natexlab{}.
\newblock \showarticletitle{The concrete distribution: A continuous relaxation
  of discrete random variables}.
\newblock \bibinfo{journal}{\emph{arXiv preprint arXiv:1611.00712}}
  (\bibinfo{year}{2016}).
\newblock


\bibitem[McMahan et~al\mbox{.}(2017)]%
        {mcmahan2017communication}
\bibfield{author}{\bibinfo{person}{Brendan McMahan}, \bibinfo{person}{Eider
  Moore}, \bibinfo{person}{Daniel Ramage}, \bibinfo{person}{Seth Hampson},
  {and} \bibinfo{person}{Blaise~Aguera y Arcas}.}
  \bibinfo{year}{2017}\natexlab{}.
\newblock \showarticletitle{Communication-efficient learning of deep networks
  from decentralized data}. In \bibinfo{booktitle}{\emph{Proc. AISTATS}}. PMLR,
  \bibinfo{pages}{1273--1282}.
\newblock


\bibitem[Pillutla et~al\mbox{.}(2022)]%
        {pillutla2022federated}
\bibfield{author}{\bibinfo{person}{Krishna Pillutla}, \bibinfo{person}{Kshitiz
  Malik}, \bibinfo{person}{Abdel-Rahman Mohamed}, \bibinfo{person}{Mike
  Rabbat}, \bibinfo{person}{Maziar Sanjabi}, {and} \bibinfo{person}{Lin Xiao}.}
  \bibinfo{year}{2022}\natexlab{}.
\newblock \showarticletitle{Federated learning with partial model
  personalization}. In \bibinfo{booktitle}{\emph{International Conference on
  Machine Learning}}. PMLR, \bibinfo{pages}{17716--17758}.
\newblock


\bibitem[Ramezani et~al\mbox{.}(2021)]%
        {ramezani2021learn}
\bibfield{author}{\bibinfo{person}{Morteza Ramezani}, \bibinfo{person}{Weilin
  Cong}, \bibinfo{person}{Mehrdad Mahdavi}, \bibinfo{person}{Mahmut Kandemir},
  {and} \bibinfo{person}{Anand Sivasubramaniam}.}
  \bibinfo{year}{2021}\natexlab{}.
\newblock \showarticletitle{Learn Locally, Correct Globally: A Distributed
  Algorithm for Training Graph Neural Networks}. In
  \bibinfo{booktitle}{\emph{International Conference on Learning
  Representations}}.
\newblock


\bibitem[Sattler et~al\mbox{.}(2020)]%
        {sattler2020clustered}
\bibfield{author}{\bibinfo{person}{Felix Sattler},
  \bibinfo{person}{Klaus-Robert M{\"u}ller}, {and} \bibinfo{person}{Wojciech
  Samek}.} \bibinfo{year}{2020}\natexlab{}.
\newblock \showarticletitle{Clustered federated learning: Model-agnostic
  distributed multitask optimization under privacy constraints}.
\newblock \bibinfo{journal}{\emph{IEEE transactions on neural networks and
  learning systems}} \bibinfo{volume}{32}, \bibinfo{number}{8}
  (\bibinfo{year}{2020}), \bibinfo{pages}{3710--3722}.
\newblock


\bibitem[Shchur et~al\mbox{.}(2018)]%
        {shchur2018pitfalls}
\bibfield{author}{\bibinfo{person}{Oleksandr Shchur},
  \bibinfo{person}{Maximilian Mumme}, \bibinfo{person}{Aleksandar Bojchevski},
  {and} \bibinfo{person}{Stephan G{\"u}nnemann}.}
  \bibinfo{year}{2018}\natexlab{}.
\newblock \showarticletitle{Pitfalls of graph neural network evaluation}.
\newblock \bibinfo{journal}{\emph{arXiv preprint arXiv:1811.05868}}
  (\bibinfo{year}{2018}).
\newblock


\bibitem[Smith et~al\mbox{.}(2017)]%
        {smith2017federated}
\bibfield{author}{\bibinfo{person}{Virginia Smith}, \bibinfo{person}{Chao-Kai
  Chiang}, \bibinfo{person}{Maziar Sanjabi}, {and} \bibinfo{person}{Ameet~S
  Talwalkar}.} \bibinfo{year}{2017}\natexlab{}.
\newblock \showarticletitle{Federated multi-task learning}.
\newblock \bibinfo{journal}{\emph{Proc. NeurIPS}}  \bibinfo{volume}{30}
  (\bibinfo{year}{2017}).
\newblock


\bibitem[Tan et~al\mbox{.}(2023)]%
        {tan2023federated}
\bibfield{author}{\bibinfo{person}{Yue Tan}, \bibinfo{person}{Yixin Liu},
  \bibinfo{person}{Guodong Long}, \bibinfo{person}{Jing Jiang},
  \bibinfo{person}{Qinghua Lu}, {and} \bibinfo{person}{Chengqi Zhang}.}
  \bibinfo{year}{2023}\natexlab{}.
\newblock \showarticletitle{Federated learning on non-iid graphs via structural
  knowledge sharing}. In \bibinfo{booktitle}{\emph{Proc. AAAI}},
  Vol.~\bibinfo{volume}{37}. \bibinfo{pages}{9953--9961}.
\newblock


\bibitem[Tsai et~al\mbox{.}(2019)]%
        {tsai2019transformer}
\bibfield{author}{\bibinfo{person}{Yao-Hung~Hubert Tsai},
  \bibinfo{person}{Shaojie Bai}, \bibinfo{person}{Makoto Yamada},
  \bibinfo{person}{Louis-Philippe Morency}, {and} \bibinfo{person}{Ruslan
  Salakhutdinov}.} \bibinfo{year}{2019}\natexlab{}.
\newblock \showarticletitle{Transformer Dissection: An Unified Understanding
  for Transformer’s Attention via the Lens of Kernel}. In
  \bibinfo{booktitle}{\emph{Proc. EMNLP}}.
\newblock


\bibitem[Vaswani et~al\mbox{.}(2017)]%
        {vaswani2017attention}
\bibfield{author}{\bibinfo{person}{Ashish Vaswani}, \bibinfo{person}{Noam
  Shazeer}, \bibinfo{person}{Niki Parmar}, \bibinfo{person}{Jakob Uszkoreit},
  \bibinfo{person}{Llion Jones}, \bibinfo{person}{Aidan~N Gomez},
  \bibinfo{person}{{\L}ukasz Kaiser}, {and} \bibinfo{person}{Illia
  Polosukhin}.} \bibinfo{year}{2017}\natexlab{}.
\newblock \showarticletitle{Attention is all you need}.
\newblock \bibinfo{journal}{\emph{Advances in neural information processing
  systems}}  \bibinfo{volume}{30} (\bibinfo{year}{2017}).
\newblock


\bibitem[Wang et~al\mbox{.}(2018)]%
        {wang2018dataset}
\bibfield{author}{\bibinfo{person}{Tongzhou Wang}, \bibinfo{person}{Jun-Yan
  Zhu}, \bibinfo{person}{Antonio Torralba}, {and} \bibinfo{person}{Alexei~A
  Efros}.} \bibinfo{year}{2018}\natexlab{}.
\newblock \showarticletitle{Dataset distillation}.
\newblock \bibinfo{journal}{\emph{arXiv preprint arXiv:1811.10959}}
  (\bibinfo{year}{2018}).
\newblock


\bibitem[Woodworth et~al\mbox{.}(2020)]%
        {woodworth2020local}
\bibfield{author}{\bibinfo{person}{Blake Woodworth},
  \bibinfo{person}{Kumar~Kshitij Patel}, \bibinfo{person}{Sebastian Stich},
  \bibinfo{person}{Zhen Dai}, \bibinfo{person}{Brian Bullins},
  \bibinfo{person}{Brendan Mcmahan}, \bibinfo{person}{Ohad Shamir}, {and}
  \bibinfo{person}{Nathan Srebro}.} \bibinfo{year}{2020}\natexlab{}.
\newblock \showarticletitle{Is local SGD better than minibatch SGD?}. In
  \bibinfo{booktitle}{\emph{International Conference on Machine Learning}}.
  PMLR, \bibinfo{pages}{10334--10343}.
\newblock


\bibitem[Xie et~al\mbox{.}(2021)]%
        {xie2021federated}
\bibfield{author}{\bibinfo{person}{Han Xie}, \bibinfo{person}{Jing Ma},
  \bibinfo{person}{Li Xiong}, {and} \bibinfo{person}{Carl Yang}.}
  \bibinfo{year}{2021}\natexlab{}.
\newblock \showarticletitle{Federated graph classification over non-iid
  graphs}. In \bibinfo{booktitle}{\emph{Proc. NeurIPS}},
  Vol.~\bibinfo{volume}{34}. \bibinfo{pages}{18839--18852}.
\newblock


\bibitem[Xu et~al\mbox{.}(2023)]%
        {xu2023personalized}
\bibfield{author}{\bibinfo{person}{Jian Xu}, \bibinfo{person}{Xinyi Tong},
  {and} \bibinfo{person}{Shao-Lun Huang}.} \bibinfo{year}{2023}\natexlab{}.
\newblock \showarticletitle{Personalized Federated Learning with Feature
  Alignment and Classifier Collaboration}. In \bibinfo{booktitle}{\emph{Proc.
  ICLR}}.
\newblock


\bibitem[Xu et~al\mbox{.}(2018)]%
        {xu2018powerful}
\bibfield{author}{\bibinfo{person}{Keyulu Xu}, \bibinfo{person}{Weihua Hu},
  \bibinfo{person}{Jure Leskovec}, {and} \bibinfo{person}{Stefanie Jegelka}.}
  \bibinfo{year}{2018}\natexlab{}.
\newblock \showarticletitle{How powerful are graph neural networks?}
\newblock \bibinfo{journal}{\emph{arXiv preprint arXiv:1810.00826}}
  (\bibinfo{year}{2018}).
\newblock


\bibitem[Yang et~al\mbox{.}(2016)]%
        {yang2016revisiting}
\bibfield{author}{\bibinfo{person}{Zhilin Yang}, \bibinfo{person}{William
  Cohen}, {and} \bibinfo{person}{Ruslan Salakhudinov}.}
  \bibinfo{year}{2016}\natexlab{}.
\newblock \showarticletitle{Revisiting semi-supervised learning with graph
  embeddings}. In \bibinfo{booktitle}{\emph{Proc. ICML}}. PMLR,
  \bibinfo{pages}{40--48}.
\newblock


\bibitem[Ye et~al\mbox{.}(2023)]%
        {ye2023personalized}
\bibfield{author}{\bibinfo{person}{Rui Ye}, \bibinfo{person}{Zhenyang Ni},
  \bibinfo{person}{Fangzhao Wu}, \bibinfo{person}{Siheng Chen}, {and}
  \bibinfo{person}{Yanfeng Wang}.} \bibinfo{year}{2023}\natexlab{}.
\newblock \showarticletitle{Personalized Federated Learning with Inferred
  Collaboration Graphs}. In \bibinfo{booktitle}{\emph{Proc. ICML}}.
\newblock


\bibitem[Zhang et~al\mbox{.}(2021)]%
        {zhang2021subgraph}
\bibfield{author}{\bibinfo{person}{Ke Zhang}, \bibinfo{person}{Carl Yang},
  \bibinfo{person}{Xiaoxiao Li}, \bibinfo{person}{Lichao Sun}, {and}
  \bibinfo{person}{Siu~Ming Yiu}.} \bibinfo{year}{2021}\natexlab{}.
\newblock \showarticletitle{Subgraph federated learning with missing neighbor
  generation}. In \bibinfo{booktitle}{\emph{Proc. NeurIPS}},
  Vol.~\bibinfo{volume}{34}. \bibinfo{pages}{6671--6682}.
\newblock


\bibitem[Zhao et~al\mbox{.}(2023)]%
        {pmlr-v195-zhao23b}
\bibfield{author}{\bibinfo{person}{Xuyang Zhao}, \bibinfo{person}{Huiyuan
  Wang}, {and} \bibinfo{person}{Wei Lin}.} \bibinfo{year}{2023}\natexlab{}.
\newblock \showarticletitle{The Aggregation–Heterogeneity Trade-off in
  Federated Learning}. In \bibinfo{booktitle}{\emph{Proceedings of Thirty Sixth
  Conference on Learning Theory}} \emph{(\bibinfo{series}{Proceedings of
  Machine Learning Research}, Vol.~\bibinfo{volume}{195})},
  \bibfield{editor}{\bibinfo{person}{Gergely Neu} {and}
  \bibinfo{person}{Lorenzo Rosasco}} (Eds.). \bibinfo{publisher}{PMLR},
  \bibinfo{pages}{5478--5502}.
\newblock
\urldef\tempurl%
\url{https://proceedings.mlr.press/v195/zhao23b.html}
\showURL{%
\tempurl}


\bibitem[Zhou et~al\mbox{.}(2020)]%
        {zhou2020distilled}
\bibfield{author}{\bibinfo{person}{Yanlin Zhou}, \bibinfo{person}{George Pu},
  \bibinfo{person}{Xiyao Ma}, \bibinfo{person}{Xiaolin Li}, {and}
  \bibinfo{person}{Dapeng Wu}.} \bibinfo{year}{2020}\natexlab{}.
\newblock \showarticletitle{Distilled one-shot federated learning}.
\newblock \bibinfo{journal}{\emph{arXiv preprint arXiv:2009.07999}}
  (\bibinfo{year}{2020}).
\newblock


\end{thebibliography}

\newpage 

\appendix
\onecolumn

\section*{Appendix}
\setcounter{secnumdepth}{2}

\section{Algorithm}\label{sec:alg}
We present the algorithm of FedGKD framework in Alg.\ref{alg:fedgkd}. 

\begin{multicols}{2}
\begin{algorithm}[H]
\caption{FedGKD Framework}\label{alg:fedgkd}
\begin{algorithmic}[1]
\Require Local datasets $\{ G_i\}$; 
\Ensure Personalized local models $\{W_i^T \}$;\\ 
\textbf{Server}
\For {$t=1,\cdots, T$}
\If {$t>1$}
\State Receive local GNN models $\{{\mathbf{W}_i^t} \}$ 
\State Run task feature extractor; 
\State Get distilled node features $\{ \mathbf{X}_i^{s,t}\} $ and their embeddings $\mathbf{H}_i^{s,t}$;
\State Construct a collaboration network $G^{c,t}$; 
\State Kernelize $G^{c,t}$ to get a matrix $\mathbf{S}$ using \eqref{Eq.kernel}; 
\State Aggregate local parameters into $\{\overline{\mathbf{W}_i^t}\}$
\EndIf
\State Send GNN models $\{\overline{\mathbf{W}_i^t} \}$ to each client; 
\State Request each client to train local models;
\EndFor
\quad \\ \\ 
\textbf{Client $i$}
\State Receive $\overline{\mathbf{W_i^t}}$ from the server; 
\State $W_i^t \leftarrow \overline{\mathbf{W_i^t}}$; 
\For{$e=1,\cdots, E_t$}
\State Update $\mathbf{W_i^t} $ with the loss function $\mathcal{L}(f(G_i; \mathbf{W}), \mathbf{y}_i) + \lambda ||\mathbf{W} - \overline{\mathbf{W_i^t}}||_2^2$;  
\EndFor
\end{algorithmic}
\end{algorithm}
\columnbreak

\begin{algorithm}[H]
\caption{Task Feature Extractor}\label{alg:extractor}
\begin{algorithmic}[1]
\Require Local model weights $\{ \mathbf{W}_i^{t}\}$; 
\Ensure Distilled node features $\{ \mathbf{X}_i^{s,t}\} $ and their embeddings $\mathbf{H}_i^{s,t}$;\\ 
\textbf{Server}
\State Receive local GNN models $\{{\mathbf{W}_i^t} \}$ from each client; 
\State Initialize node features $\mathbf{X}_0\sim \mathcal{N}(0,1)$; 
\State Initialize labels $\mathbf{y}_0$; 
\State Send $\mathbf{X}_0, \mathbf{y}_0$ to each client; 
\State Receive distilled node features $\{ \mathbf{X}_i^{s,t}\} $ and their embeddings $\mathbf{H}_i^{s,t}$ from each client; 
\\ \\ 
\textbf{Client $i$}
\State Receive $\mathbf{X}_0$ and $\mathbf{y}_0$ from the server; 
\State $\mathbf{X}^{s,t}_i\leftarrow \mathbf{X}_0$; $\mathbf{y}^{s,t}_i\leftarrow \mathbf{y}_0$;
\State Convert $\mathbf{y}^{s,t}_i$ into  one-encoding form; 
\For {$e=1,\cdots, E_d$}
\State Use Gumbel-softmax to sample edges using \eqref{Eq.cons_a}; 
\State Compute loss $\mathcal{L}(f(G_i^{s,t};\mathbf{W}_i^t),\mathbf{y}_i^{s,t})$;
\State Update $\mathbf{X}^{s,t}_i$ and $\mathbf{y}^{s,t}_i$;
\EndFor
\State Send $\mathbf{X}^{s,t}_i$ and their embeddings to the server; 
\end{algorithmic}
\end{algorithm}
\end{multicols}

\section{Experiment Setup\label{app:es}}

\subsection{Datasets\label{app:dataset}}
We display the graph statistics in Table \ref{tab:dataset}. The 6 graphs are of different scales and have different clustering coefficients. 

\begin{table*}[ht]
\centering
\begin{tabular}{lcccccc}
\toprule[2pt]
Dataset & Cora  & CiteSeer & PubMed & Amazon Photo & Amazon Computers & Obgn Arxiv \\
\midrule
\# Nodes & 2,485  & 2,120  & 19,717  & 7,487  & 13,381  & 169,343  \\
\# Edges & 10,138  & 7,358  & 88,648  & 238,086  & 491,556  & 2,315,598  \\
\# Features & 1,433 & 3,703 & 500 & 745   & 767   & 128 \\
\# Categories & 7     & 6     & 3     & 8     & 10    & 40  \\
Clustering & 0.238  & 0.170  & 0.062  & 0.411  & 0.351  & 0.226  \\
\bottomrule[2pt]
\end{tabular}%
\caption{Original Graph Statistics}
\label{tab:dataset}
\end{table*}
For non-overlapping and overlapping datasets, we display their average clustering coefficents and heterogeneity  measured by medians of the client-wise label Jensen–Shannon divergences. Generally, a larger number of clients in the federated system leads to more heterogeneous local graphs. 
\begin{table*}[ht]
  \centering
\begin{tabular}{lccccccccc}
\toprule[2pt]
Dataset & \multicolumn{3}{c}{Cora} & \multicolumn{3}{c}{CiteSeer} & \multicolumn{3}{c}{PubMed} \\
\midrule
\# Clients & 5     & 10    & 20    & 5     & 10    & 20    & 5     & 10    & 20  \\
\midrule
\# Nodes  & 497   & 249   & 124   & 424   & 212   & 106   & 3,943  & 1,972  & 986  \\
\# Edges & 1,871  & 891   & 424   & 1,411  & 673   & 327   & 16,411  & 7,557  & 3,616  \\
Clustering & 0.250  & 0.261  & 0.266  & 0.176  & 0.177  & 0.181  & 0.063  & 0.066  & 0.068  \\
Heterogeneity & 0.304  & 0.358  & 0.391  & 0.142  & 0.229  & 0.255  & 0.087  & 0.097  & 0.119  \\
\midrule[2pt]
Dataset & \multicolumn{3}{c}{Amazon Photo} & \multicolumn{3}{c}{Amazon Computers} & \multicolumn{3}{c}{Obgn Arxiv} \\
\midrule
\# Clients  & 5     & 10    & 20    & 5     & 10    & 20    & 5     & 10    & 20  \\
\midrule
\# Nodes  & 1,497  & 749   & 374   & 2,676  & 1,338  & 669   & 33,868  & 16,934  & 8,467  \\
\# Edges & 42,930  & 19,294  & 8,300  & 84,202  & 35,589  & 24,577  & 406,896  & 182,758  & 86,150  \\
Clustering & 0.431  & 0.458  & 0.478  &  0.383  & 0.405  &  0.418  &  0.245 & 0.255  & 0.268 \\
Heterogeneity & 0.470  & 0.546  & 0.568  & 0.350   &   0.373  & 0.460   & 0.372  &0.398  & 0.412 \\
\bottomrule[2pt]
\end{tabular}%
  \caption{Non-overlapping Dataset Statistics}
 \label{tab:dataset_no}%
\end{table*}%

To create federated datasets with overlapping vertices, we first use Metis graph partitioning algorithm to split the full graph into $\lfloor \frac{n}{5}\rfloor$ subgraphs. Then we randomly sample a half of the vertices from each subgraph for 5 times to create 5 graphs with overlapping vertices. After operating the same procedures for all the $\lfloor \frac{n}{5}\rfloor$ subgraphs, we consider the resulted $n$ graphs belonging to different community but with overlapping vertices as the $n$ local datasets. The dataset statistics of the overlapping datasets are shown in Table \ref{tab:dataset_o}. Overlapping datasets suffer from great losses of edges compared with the global graph. In general, a small number of clients leads to a large number of missing edges. 
\begin{table}[h]
    \centering
\begin{tabular}{lccccccccc}
\toprule[2pt]
Dataset & \multicolumn{3}{c}{Cora} & \multicolumn{3}{c}{CiteSeer} & \multicolumn{3}{c}{PubMed} \\
\midrule
\# Clients & 10    & 30    & 50    & 10    & 30    & 50    & 10    & 30    & 50  \\
\midrule
\# Nodes  & 621   & 207   & 124   & 530   & 176   & 106   & 4,929  & 1,643  & 986  \\
\# Edges & 752   & 246   & 141   & 590   & 184   & 104   & 6,494  & 2,090  & 1,158  \\
Clustering & 0.069  & 0.087  & 0.080  & 0.053  & 0.056  & 0.050  & 0.020  & 0.021  & 0.022  \\
Heterogeneity & 0.176  & 0.310  & 0.358  & 0.230  & 0.215  & 0.225  & 0.153  & 0.090  & 0.089  \\
\midrule[2pt]
Dataset & \multicolumn{3}{c}{Amazon Photo} & \multicolumn{3}{c}{Amazon Computers} & \multicolumn{3}{c}{Obgn Arxiv} \\
\midrule
\# Clients  & 10    & 30    & 50    & 10    & 30    & 50    & 10    & 30    & 50  \\
\midrule
\# Nodes  & 1,872  & 623   & 374   & 3,345  & 1,115  & 669   & 42,336  & 14,112  & 8,467 \\
\# Edges & 18,311  & 5,591  & 2,987  & 36,597  & 10,283  & 5,516  & 175,717  & 51,520  & 28,487 \\
Clustering & 0.291  & 0.297  & 0.307  & 0.256  & 0.258  & 0.268  & 0.122  & 0.127  & 0.131 \\
Heterogeneity & 0.310  & 0.416  & 0.535  & 0.316  & 0.422  & 0.344  & 0.371  &  0.363   &  0.507 \\
\bottomrule[2pt]
\end{tabular}%
    \caption{Overlapping Dataset Statistics}
    \label{tab:dataset_o}
\end{table}
\subsection{Baselines\label{app:base}}
We have compared the performance of our framework with 8 baselines. We will introduce the baselines briefly. 
\begin{itemize}
    \item Local: A local learning framework in which each client trains a model upon the local dataset without any collaboration. 
    \item FedAvg: Clients send the local weights to the server and the server averages all the weights to initiate next round training. 
    \item FedProx: Clients train the local models based on a local loss function added by a regularization term with importance hyper-parameter $0.001$. 
    \item FedPer: Clients only send the GCN layers to the server without uploading READOUT layer weights. 
    \item FedPub: Server computes $n$ graph embeddings from a randomized graph input to feature each task and aggregates the weights according to the graph embeddings. 
    \item FedSage: Clients learn a GraphSage model and aggregate weights according to FedAvg framework. 
    \item GCFL: Server cluster clients into groups so that clients belonging to the same groups aggregate weights together. We use the recommended parameters in the paper \cite{xie2021federated}.
    \item FedStar: Clients decouple the node embeddings into shared structural embeddings and local embeddings. Clients only upload the GNN weights generating structural embeddings for server to aggregate. We use the recommended parameters in the paper \cite{tan2023federated}. 
\end{itemize}

\subsection{Parameter Configuration\label{app:pc}}
We perform a hyperparameter tuning using a grid-search method within the following range: 
\begin{itemize}
    \item Learning rate: $\{0.005, 0.01 \}$;
    \item Number of local training epochs: $\{1,3,5,7\}$;
    \item Sparsity control $\gamma$: $\{ 0.001, 0.75, 1.5,2.5,5.0\} $;
    \item Temperature on element-wise exponential $\tau_s$: $\{1,3,5,7,9 \}$;
    \item Temperature on matrix exponential $\tau$: $\{0.05,0.1,0.25,0.5,0.75,1.0 \}$;
    \item Weights on proximal term $\lambda$: $\{10^{-5}, 10^{-3} \}$.
\end{itemize}

\section{Ablation Study on Dynamic Task Feature Extractor}\label{sec:static_vs_dynamci}
\begin{figure}[h]
\begin{minipage}{0.3\textwidth}
    \includegraphics[width=0.97\textwidth]{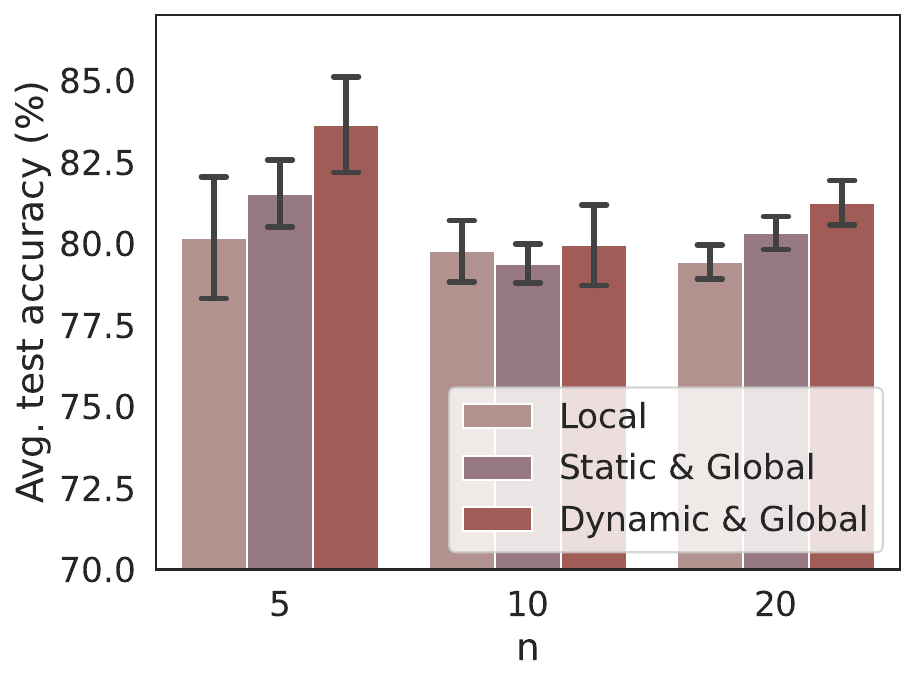}
    \caption{Effects of static and dynamic task feature extractor}
    \label{fig:dynamic}
\end{minipage}
\begin{minipage}{0.65\textwidth}
    We conducted a series of experiments in order to compare the effects of static and dynamic task feature extractors within our framework. These experiments were carried out on Cora dataset using non-overlapping settings. To accomplish this, we replaced our dynamic feature extractor, which distilled graphs based on the current model weights during each communication round, with a static extractor. This static extractor utilized a graph distillation method introduced in \cite{jin2022condensing} to ensure that the performance of the model training on the synthetic small graph closely approximated the "final" performance achieved when training on the original, larger graph before the federated learning procedures. In this way, the static extractor offers static representations of tasks, which remain unchanged throughout the entire federated learning process. As shown in Figure \ref{fig:dynamic}, the dynamic task feature extractor is visually replaced by the static extractor for our experimental purposes. Notably, this replacement results in a 2.4\% degradation in performance. The inferior performance of static task feature extractor comes from the unsatisfying local sample quality and quantity.
    Additionally, we compared the performance of the static extractor with a local training baseline and found that the static extractor still outperforms local learning.

\end{minipage}
\end{figure}

\section{On the validity of $k$ in (11)}\label{sec:proof}
Below we state the result that $k$ in \eqref{Eq.kernel} is a valid kernel:
\begin{proposition}
\label{prop:kernel}
    The function $k$ is a valid kernel over the domain $[n]$.
\end{proposition}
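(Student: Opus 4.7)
The plan is to show that $k$ is symmetric and positive semidefinite on the finite domain $[n]$. Since $[n]$ is finite, this reduces to verifying that the Gram matrix $K = (k(i,j))_{i,j \in [n]}$ is symmetric PSD, which I would establish by chaining together three layers: (i) $\mathbf{R}$ is symmetric PSD, (ii) the matrix exponential $\mathbf{S} = e^{\tau \mathbf{R}}$ inherits this structure spectrally, and (iii) the entrywise exponential preserves PSD via a Taylor expansion combined with the Schur product theorem.

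First I would argue that $\mathbf{R}^t$ is symmetric PSD. Symmetry is immediate from the symmetry of Pearson's correlation. For PSD, note that for each feature index $k$ the matrix $\left(\mathrm{corr}(\mathbf{M}_i^t[:,k], \mathbf{M}_j^t[:,k])\right)_{i,j}$ equals $\tilde{\mathbf{u}}_k^\top \tilde{\mathbf{u}}_k$, where $\tilde{\mathbf{u}}_k$ stacks the centered, unit-normalized columns across clients; this is a Gram matrix and hence PSD. The definition in \eqref{eq:r} averages $d+D$ such matrices, and a nonnegative combination of PSD matrices is PSD.

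Next, since $\mathbf{R}^t$ is symmetric, spectral decomposition gives $\mathbf{R}^t = \mathbf{U} \Lambda \mathbf{U}^\top$ with $\Lambda$ real-diagonal, so $\mathbf{S} = e^{\tau \mathbf{R}^t} = \mathbf{U} \, e^{\tau \Lambda} \, \mathbf{U}^\top$ is symmetric with strictly positive eigenvalues $e^{\tau \lambda_i(\mathbf{R}^t)}$; in particular $\mathbf{S}$ is symmetric PSD. Expanding the entrywise exponential using its scalar Taylor series gives
\begin{equation*}
    K \;=\; \sum_{\ell = 0}^{\infty} \frac{\tau_s^\ell}{\ell!}\, \mathbf{S}^{\odot \ell},
\end{equation*}
where $\mathbf{S}^{\odot \ell}$ is the $\ell$-th Hadamard power, with $\mathbf{S}^{\odot 0}$ interpreted as the all-ones matrix $\mathbf{1}\mathbf{1}^\top$. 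By the Schur product theorem \cite{horn2012matrix}, the Hadamard product of two PSD matrices is PSD, so by induction each $\mathbf{S}^{\odot \ell}$ is PSD. Since the Taylor coefficients $\tau_s^\ell / \ell!$ are nonnegative and the series converges entrywise, $K$ is a convergent limit of PSD matrices and hence PSD.

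The main obstacle is the third step: the entrywise exponential is not the matrix exponential, and its PSD-preserving property is not a direct consequence of spectral calculus. The Taylor expansion together with the Schur product theorem is what bridges this gap, and handling the $\ell = 0$ constant term (the all-ones matrix, which is rank-one PSD) must be included explicitly rather than absorbed into the inductive step. The remaining ingredients, symmetry of correlations and the spectral mapping theorem, are routine.
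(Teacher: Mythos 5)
Your proof is correct and follows essentially the same route as the paper: establish that $\mathbf{S}=e^{\tau\mathbf{R}}$ is symmetric PSD via the spectral mapping, then conclude that the entrywise exponential of a PSD matrix is PSD --- the paper simply cites this last fact as \cite[Theorem 7.5.9]{horn2012matrix}, whereas you reprove it via the Taylor/Schur-product argument. Your additional verification that $\mathbf{R}$ itself is PSD is correct but not needed (symmetry of $\mathbf{R}$ already makes every eigenvalue $e^{\tau\lambda_i}$ positive), which is precisely why the paper emphasizes the matrix exponential's ``loose requirements'' on $\mathbf{R}$.
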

\begin{proof}
    Let $\{\lambda_1, \ldots, \lambda_n\}$ be the eigenvalues of $\mat{R}$. It then follows that the eigenvalues of $\mat{S}$ are $\{e^{t\lambda_1}, \ldots, e^{t\lambda_n}\}$ which are non-negative, therefore $\mat{S}$ is positive semidefinite. The proposition follows by \cite[Theorem 7.5.9]{horn2012matrix}
\end{proof}


\end{document}